%%%% ijcai21.tex

\typeout{IJCAI--21 Instructions for Authors}

% These are the instructions for authors for IJCAI-21.

\documentclass{article}
\pdfpagewidth=8.5in
\pdfpageheight=11in
% The file ijcai21.sty is NOT the same than previous years'
\usepackage{ijcai21}

% Use the postscript times font!
\usepackage{times}

\usepackage{soul}
\usepackage{url}
\usepackage[hidelinks]{hyperref}
\usepackage[utf8]{inputenc}
\usepackage[small]{caption}
\usepackage{graphicx}
\usepackage{amsmath}
\usepackage{amsthm}
\usepackage{mathrsfs}
\usepackage{amssymb}
\usepackage{booktabs}
\usepackage{algorithm}
\usepackage{algorithmic}
\usepackage{bm} 
\usepackage{overpic}
\usepackage{subfigure}
\usepackage{color}
\usepackage[T1]{fontenc}
\usepackage[dvipsnames]{xcolor}
\newcommand{\etal}{\textit{et al.} }
\usepackage[final]{changes}
\setlength{\abovecaptionskip}{0.cm}

\urlstyle{same}

% the following package is optional:
%\usepackage{latexsym}

% See https://www.overleaf.com/learn/latex/theorems_and_proofs
% for a nice explanation of how to define new theorems, but keep
% in mind that the amsthm package is already included in this
% template and that you must *not* alter the styling.

\newtheorem{theorem}{Theorem}

\newtheorem{lemma}{Lemma}

% Following comment is from ijcai97-submit.tex:
% The preparation of these files was supported by Schlumberger Palo Alto
% Research, AT\&T Bell Laboratories, and Morgan Kaufmann Publishers.
% Shirley Jowell, of Morgan Kaufmann Publishers, and Peter F.
% Patel-Schneider, of AT\&T Bell Laboratories collaborated on their
% preparation.

% These instructions can be modified and used in other conferences as long
% as credit to the authors and supporting agencies is retained, this notice
% is not changed, and further modification or reuse is not restricted.
% Neither Shirley Jowell nor Peter F. Patel-Schneider can be listed as
% contacts for providing assistance without their prior permission.

% To use for other conferences, change references to files and the
% conference appropriate and use other authors, contacts, publishers, and
% organizations.
% Also change the deadline and address for returning papers and the length and
% page charge instructions.
% Put where the files are available in the appropriate places.

%PDF Info Is REQUIRED.
\pdfinfo{
/TemplateVersion (IJCAI.2021.0)
}

\title{Optimal ANN-SNN Conversion for Fast and Accurate Inference in Deep Spiking Neural Networks}

% Single author syntax
% \author{
%     First Author
%     \affiliations
%     First Affiliation
%     \emails
%     first@example.com
% }

% Multiple author syntax (remove the single-author syntax above and the \iffalse ... \fi here)
% Check the ijcai21-multiauthor.tex file for detailed instructions
% \iffalse
% \author{
% First Author$^1$
% \and
% Second Author$^2$\and
% Third Author$^{2,3}$\And
% Fourth Author$^4$
% \affiliations
% $^1$First Affiliation\\
% $^2$Second Affiliation\\
% $^3$Third Affiliation\\
% $^4$Fourth Affiliation
% \emails
% \{first, second\}@example.com,
% third@other.example.com,
% fourth@example.com
% }

% \author{
% 	Jianhao Ding$^1$
% 	\and
% 	Zhaofei Yu$^{1,2}$\and
% 	Yonghong Tian$^{1,2}$\And
% 	Tiejun Huang$^1$
% 	\affiliations
% 	$^1$National Engineering Laboratory for Video Technology, School of Electronics Engineering and Computer Science, Peking University, Beijing\\
% 	$^2$Peng Cheng Laboratory, Shenzhen
% 	\emails
% 	djh01998@stu.pku.edu.cn,
% 	\{yuzf12, yhtian, tjhuang\}@pku.edu.cn
% }

\author{
	Jianhao Ding$^{1}$
	\and
	Zhaofei Yu$^{1,2,3}$\thanks{Corresponding author}\and
	Yonghong Tian$^{1,3 *}$\And
	Tiejun Huang$^{1,2,3}$\
	\affiliations
	$^1$Department of Computer Science and Technology, Peking University\\
	$^2$Institute for Artificial Intelligence, Peking University\\
	$^3$Peng Cheng Laboratory
	\emails
	djh01998@stu.pku.edu.cn, \{yuzf12,yhtian,tjhuang\}@pku.edu.cn
}

\begin{document}

\maketitle

\begin{abstract}

% Spiking Neural Networks (SNNs), as bio-inspired energy-efficient neural networks, have attracted great attention from researchers and the industry. The most efficient way to train deep SNNs is through ANN-SNN conversion. However, the conversion usually suffers from accuracy loss and long inference time, which impede the practical application of SNN. In this paper, we theoretically analyze ANN-SNN conversion and derive the sufficient conditions of the optimal conversion.   To better correlate ANN and SNN and get greater accuracy, we propose the Rate Norm Layer to replace the ReLU activation function in source ANN\replaced{ training. It enables}{,  that enable} us to directly \replaced{convert}{converse} a trained ANN to an SNN\deleted{ without any operation}. 
% Moreover, we propose an optimal fit curve to quantify the fitness between the activation value of source ANN and the actual firing rate of target SNN. We show that the inference time can be reduced by optimizing the upper bound of the fit curve\added{ in the revised ANN to achieve fast inference}. Our theory can explain the existing work on fast reasoning and get better results. The experimental results show that the proposed method achieves near loss-less conversion with VGG-16, PreActResNet-18, and deeper structures. 
% Moreover, it can reach $8.6\times$ faster reasoning under $0.265\times$ energy consumption of the typical method. \added{The code is available at \color{RubineRed}\textit{\url{https://github.com/DingJianhao/OptSNNConvertion-RNL-RIL}}}.

Spiking Neural Networks (SNNs), as bio-inspired energy-efficient neural networks, have attracted great attention\added{s} from researchers and \deleted{the }industry. The most efficient way to train deep SNNs is through ANN-SNN conversion. However, the conversion usually suffers from accuracy loss and long inference time, which impede the practical application of SNN. In this paper, we theoretically analyze ANN-SNN conversion and derive \deleted{the }sufficient conditions of the optimal conversion. To better correlate ANN\replaced{-}{ and }SNN and get greater accuracy, we propose\deleted{ the} Rate Norm Layer to replace the ReLU activation function in source ANN training, enabling direct conversion from a trained ANN to an SNN. Moreover, we propose an optimal fit curve to quantify the \replaced{fit}{fitness} between the activation value of source ANN and the actual firing rate of target SNN. We show that the inference time can be reduced by optimizing the upper bound of the fit curve in the revised ANN to achieve fast inference. Our theory can explain the existing work on fast reasoning and get better results. The experimental results show that the proposed method achieves near loss-less conversion with VGG-16, PreActResNet-18, and deeper structures. Moreover, it can reach $8.6\times$ faster reasoning\added{ performance} under $0.265\times$ energy consumption of the typical method. The code is available at \color{RubineRed}\textit{\url{https://github.com/DingJianhao/OptSNNConvertion-RNL-RIL}}.

\end{abstract}

\section{Introduction}

As a representative of artificial intelligence methods, deep learning has begun to exceed or approach human performance in various tasks, including image classification, natural language processing, and electronic sports~\cite{he2016deep,brown2020language,berner2019dota}. But this success is at the cost of high energy consumption. Recently, neuromorphic hardware, including TrueNorth, SpiNNaker, Loihi, and so on~\cite{debole2019truenorth,painkras2013spinnaker,davies2018loihi}, is attracting more and more researchers due to their high temporal resolution and low power budget. This kind of hardware runs Spiking Neural Networks (SNNs) instead of Artificial Neural Networks (ANNs).  With unique memory and communication designs, an SNN implemented on SpikNNaker \replaced{can achieve}{achieves} the power consumption of 0.3W for MNIST classification~\cite{stromatias2015scalable}. For object detection, the spiking version of a YOLO model is estimated to be at least 100 times\added{ more} energy-efficient than that on GPU~\cite{kim2020spiking}. Such dedicated hardware and algorithms are highly appealing for mobile applications like autonomous vehicles and other power-limited scenes.

\begin{figure}[t]
	    \includegraphics[width=0.9\linewidth,trim={0 0 0 0},clip]{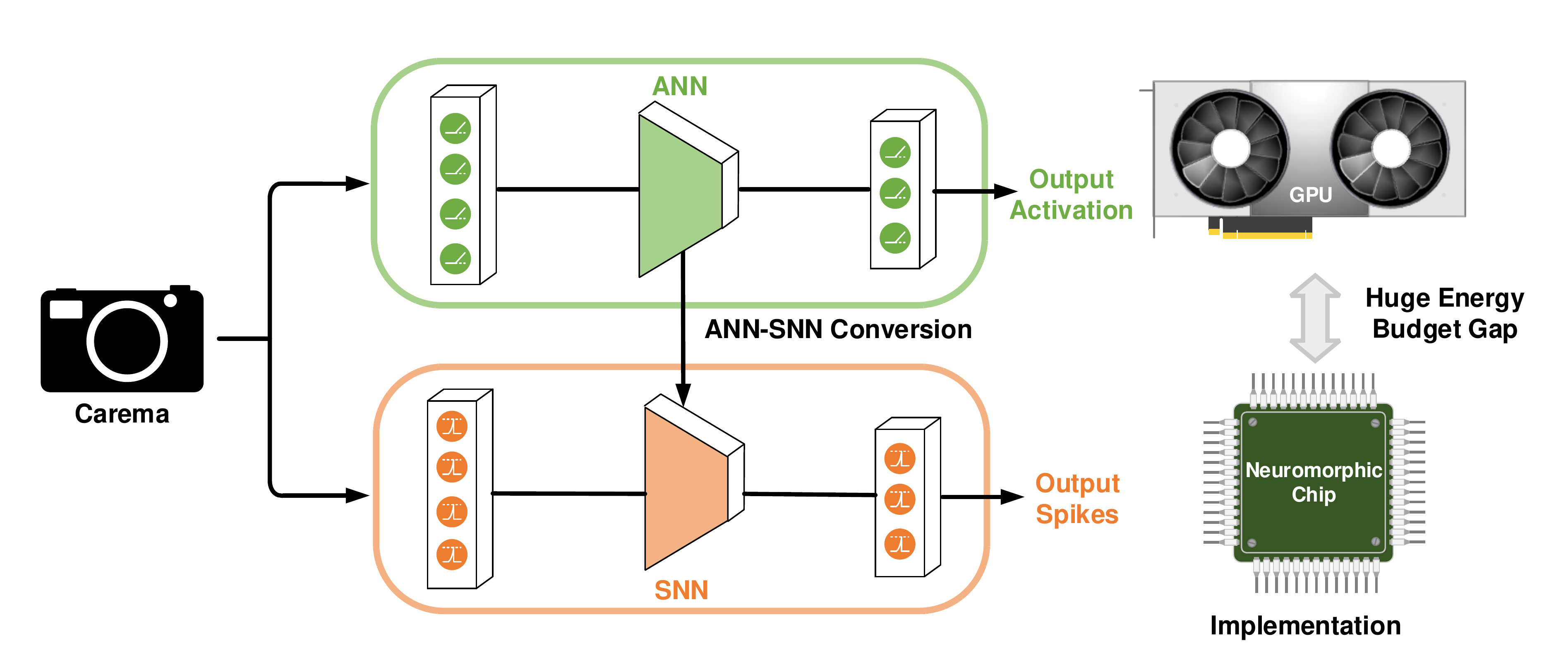}
	    \caption{Illustration of the ANN-SNN converison.}
\label{fig:annsnn1}	
\end{figure}

Nevertheless, training high-performance SNNs is a nontrivial problem. The neurons in SNNs emit discrete spikes, which disables the direct backpropagation training. Up to now, the training algorithms for SNNs can be summarised into four methodologies: supervised backpropagation through time~\cite{wu2019direct,zenke2021remarkable}, unsupervised STDP learning~\cite{kheradpisheh2018stdp,diehl2015unsupervised}, ANN-SNN conversion~\cite{cao2015spiking,diehl2015fast,rueckauer2017conversion}, and other mixture methods~\cite{lee2018training,tavanaei2019bp,rathi2020enabling}. For deep \replaced{SNN}{SNNs} training, ANN-SNN conversion requires less GPU computing than supervised training with surrogate gradients. Meanwhile, it has yielded the best performance in large-scale networks and datasets among methodologies. Therefore, ANN-SNN conversion has become the first choice for deep SNN training, which is also the focus of this paper.

As illustrated in Fig.~\ref{fig:annsnn1}, ANN-SNN conversion is to map the parameters of a pre-trained ANN to an SNN with low accuracy loss. 
Cao \etal~\shortcite{cao2015spiking} started the study of ANN-SNN conversion. They found the equivalence between the ReLU activation and \replaced{the }{a }spiking \replaced{neurons'}{neuron's} firing rate, which is the foundation of later rate-based methods. 
Diehl \etal~\shortcite{diehl2015fast} attributed the performance loss to inappropriate activation of neurons and proposed Weight Normalization methods (model-based and data-based) to scale the \added{ANN }weights. 
Rueckauer \etal~\shortcite{rueckauer2017conversion} gave a detailed theoretical explanation of ANN-SNN conversion and proposed a new reset-by-subtraction neuron to overcome accuracy degradation. They also extended the use of bias and Batch Normalization (BN) and proposed the Max Normalization algorithm\added{ (Max Norm for short)}, which uses maximum\added{ values of} activation as scaling factors. 
Kim \etal~\shortcite{kim2020spiking} suggested to use \deleted{enhanced normalization and }channel-wise normalization for convolutional nets. 
Different from conversion using ANN activation, Sengupta \etal~\shortcite{sengupta2019going} proposed SpikeNorm, which makes use of spiking statistics to set the thresholds. 
\deleted{In a sense, setting the threshold is equivalent to scaling parameters. }
To enable more accurate conversion, Rueckauer \etal~\shortcite{rueckauer2017conversion} further proposed Robust Normalization\added{ (Robust Norm for short)}, where the scaling factor changes from maximum activation value to 99.9\% of activation. Yet this is not the first attempt to manually manipulate the threshold or factor. Cao \etal~\shortcite{cao2015spiking} set the firing thresholds based on spike density. These practices allow the firing rates of some neuron\added{s} to be constant 1. Here we refer to this phenomenon as \emph{spike saturation}. 
However, both the maximum and 99.9\% are rigid, which inspires us to explore a trainable way to achieve low conversion loss.

The conversion methods mentioned above all incur long simulation time when applied to deeper networks and more complicated datasets. That is, converted SNNs need a longer time to rival the original ANNs in precision. This restricts the practical promotion, such as real-time tracking and detection. 
Robust Normalization somewhat mitigates this problem by increasing the firing rates. Spike saturation actually causes the subsequent layers to infer faster. Based on this observation, Han \etal~\shortcite{han2020rmp} started to improve the inference latency by scaling the SpikeNorm thresholds. Han \etal then gave a theoretical analysis of the scale (setting the firing threshold as the expectation of weights $\times$ spikes), but they used the manually set value eventually. Nevertheless, this inspires us that inference latency and parameters can establish associations on the model. A hybrid training scheme also helps. Rathi \etal~\shortcite{rathi2020enabling} realized fewer inference time-steps by conversion-based initialization and spike-timing-dependent backpropagation. Other methods concern coding schemes to achieve fast inference \added{(i.e. shorter inference time)}, including Temporal-Switch Coding~\cite{han2020deep}, FS-conversion coding \cite{stockl2020classifying}. However, simply bypassing rate coding is not that rational. Though rate coding is not the perfect coding scheme, it is partially in line with observations \replaced{in}{of} the visual cortex~\cite{rullen2001rate}. Therefore, fast inference for rate-encoding deep SNNs is an important research direction. But it still lacks instructive principles and theories.

In this paper, we propose an ANN-SNN conversion method that enables high accuracy and low latency. The main contributions of this paper are summarized as follows:
\begin{itemize}
    \item We theoretically analyze ANN-SNN conversion and derive the sufficient conditions of the optimal conversion. Based on this, we propose\deleted{d the} Rate Norm Layer to replace the ReLU activation function in source ANN,  enabling direct conversion from a trained ANN to an SNN. \deleted{without any operation. }This will reduce the potential loss of information caused by normalization.
    \item We propose an optimal fit curve to quantify the fit between the activation value of source ANN and the actual firing rate of target SNN, and derive one upper bound of this convergent curve. We show that based on the Squeeze Theorem, the inference time can be reduced by optimizing the coefficient in the upper bound. These results can not only systematically explain previous findings that reasonable scaling of the threshold can speed up inference, but also give a proper theoretical basis for fast inference research.
    \item We demonstrate the utility of the proposed method with near loss-less conversion in deep network architectures on the MNIST, CIFAR-10, CIFAR-100 datasets. Moreover, it achieves $8.6\times$ faster reasoning under $0.265\times$ energy consumption of the typical method.
\end{itemize}

\section{Methods}
In this section, \replaced{a}{the} theory for ANN-SNN conversion is first introduced. Based on this, \deleted{the }Rate Norm Layer with trainable threshold is thus proposed. Then, we analyse the reason for slow inference and suggest optimization for fit of firing rates. Finally, we present \replaced{a}{the} stage-wise learning strategy for accurate and fast SNN.

\subsection{Theory for Conversion from ANN to SNN}
The fundamental principle of ANN-SNN conversion is to match\replaced{ analog neurons' activation}{ an analog neuron's activation} with \replaced{spiking neurons' firing rate}{the firing rate of a spiking neuron}. One common way is to convert ReLU nonlinearity activation to the Integrate-and-Fire (I\&F) neuron. To be specific, for analog neurons in layer $l$ ($l=1,2,...,L$),
the ReLU activation can be described by:
\begin{align}
    \label{ann}
	\bm{a}_l = \max(W_{l-1}\bm{a}_{l-1}+\bm{b}_{l-1},0),
\end{align}
where vector $\bm{a}_l$ is the output of all ReLU-based artificial neurons in layer $l$,
$W_{l-1}$ and $\bm{b}_{l-1}$ is the weight and the bias term for the neurons in layer $l-1$.

As for the I\&F neuron, 
%it is so simple a spiking model that has been implemented on most neuromorphic chips. 
the membrane potential $v_{l}^i(t)$ for the $i$-th neuron in layer $l$ is formulated by:
\begin{align}
\label{eq:dynamic}
    \frac{dv^i_{l}(t)}{dt} = \sum_j \sum_{t_j\in T_j} W^{ij}_{l-1} \delta (t-t_j)+{b}_{l-1}^i,
\end{align}
where $b_{l-1}^i$ denotes the \replaced{bias}{input} current to the $i$-th neuron, $W^{ij}_{l-1}$ denotes the synaptic weight between the $j$-th presynaptic neuron in layer $l-1$ and the $i$-th neuron in layer $l$. $\delta (\cdot)$ is the delta function. $T_j$ denotes the set of \deleted{the }spike time\deleted{s} of the $j$-th presynaptic neuron, i.e., $T_j=\{t_j^{(1)},t_j^{(2)},\dots,t_j^{(K)}\}$.
When the membrane potential $v_{l}^i(t)$ exceeds the firing threshold $v_{th,l}$ in layer $l$, a spike is generated and the membrane potential $v_{l}^i(t)$ is reset to the rest value $v_{rest}<v_{th,l}$. 

To match \replaced{analog neurons' activation}{an analog neuron’s activation} with \replaced{spiking neurons'}{a spiking neuron's} firing rate, we discretize and vectorize Eq.~\ref{eq:dynamic} into time-steps and obtain the \replaced{spiking}{reset-by-subtraction} neuron\added{ model} for layer $l$.
\begin{align}
	\label{eq:neuron}
	   \bm{m}_l(t) &=  \bm{v}_l(t-1) + W_{l-1} \bm{s}_{l-1}(t)+\bm{b}_{l-1}, \nonumber \\
		\bm{s}_l(t) &= U(\bm{m}_l(t)-{v}_{th,l}),\\
		\bm{v}_l(t) &= \bm{m}_l(t)-{v}_{th,l}\bm{s}_l(t), \nonumber
\end{align}
where $\bm{m}_l(t)$ and $\bm{v}_l(t)$ represent the membrane potential of all I\&F neurons in layer $l$ after neuronal dynamics and after the trigger of a spike at time $t$, $U(\cdot)$ is the Heaviside Step Function, $\bm{s}_l(t)$ denotes the vector of \deleted{the }binary spike\added{s}, the element \replaced{in}{of} which equals 1 if there is a spike and 0 otherwise.
$\bm{b}_{l-1}$ is the vector of ${b}_{l-1}^i$, and $W_{l-1}$ is the weight matrix.
Note that here we use the "soft reset" \cite{han2020rmp} instead of the "hard reset".
At the moment of a spike, the membrane potential $\bm{v}_l(t)$ \replaced{is reduced}{reduces} by an amount equal to the firing threshold $v_{th,l}$, instead of going back to the reset value.  

Based on these \replaced{definitions}{definition}, we can derive the relationship between the firing rate $\bm{r}_l(t)$ of spiking neurons in layer $l$ and $\bm{r}_{l-1}(t)$ of neurons in layer $l-1$, which is depicted in Lemma~\ref{theorem:firing_rate}.  The proof can be found in the Appendix.

\begin{lemma} 
\label{theorem:firing_rate}
For a spiking neural network \replaced{consisting}{consists} of the reset-by-subtraction neurons mentioned in Eq.~\ref{eq:neuron}, assume that $W_{l-1}$ and $\bm{b}_{l-1}$ are the parameters for layer $l-1$. Then when $t \rightarrow \infty$, the relation of the firing rate $\bm{r}_l(t)$ and $\bm{r}_{l-1}(t)$ is given by:
\begin{align}
	\bm{r}_l = {\rm clip} \left (\frac{W_{l-1}\bm{r}_{l-1}+\bm{b}_{l-1}}{v_{th,l}},0,1 \right ),
	\label{eq:accu_rate}
\end{align}
where ${\rm clip}(x,0,1)=x$ when $x\in [0,1]$, ${\rm clip}(x,0,1)=1$ when $x > 1$, and ${\rm clip}(x,0,1)=0$ when $x < 0$. 
\end{lemma}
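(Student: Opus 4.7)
The plan is to start from the discrete neuron dynamics in Eq.~\ref{eq:neuron} and convert them into a statement about time-averaged quantities by telescoping. Combining the three update rules gives a single per-step identity $\bm{v}_l(t) - \bm{v}_l(t-1) = W_{l-1}\bm{s}_{l-1}(t) + \bm{b}_{l-1} - v_{th,l}\bm{s}_l(t)$, which I would sum from $t=1$ to $T$. The left-hand side collapses to $\bm{v}_l(T) - \bm{v}_l(0)$, and dividing by $T v_{th,l}$ and using the definition $\bm{r}_l(T) = \frac{1}{T}\sum_{t=1}^T \bm{s}_l(t)$ yields
\begin{equation*}
\bm{r}_l(T) = \frac{W_{l-1}\bm{r}_{l-1}(T) + \bm{b}_{l-1}}{v_{th,l}} + \frac{\bm{v}_l(0) - \bm{v}_l(T)}{T\, v_{th,l}}.
\end{equation*}
This is the target identity up to the residual term in $\bm{v}_l(T)/T$, so everything reduces to controlling the membrane potential as $T \to \infty$.

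Next I would handle the three regimes enforced by the clip. In the \emph{linear regime}, where the per-step input $W_{l-1}\bm{s}_{l-1}(t) + \bm{b}_{l-1}$ keeps $(W_{l-1}\bm{r}_{l-1} + \bm{b}_{l-1})/v_{th,l}$ inside $[0,1]$, the soft-reset rule guarantees that after each spike $\bm{v}_l(t) \in [0, v_{th,l})$ coordinatewise, so $\bm{v}_l(T)$ stays bounded and the residual term vanishes, leaving the clean identity $\bm{r}_l = (W_{l-1}\bm{r}_{l-1} + \bm{b}_{l-1})/v_{th,l}$. In the \emph{upper-saturation regime}, where the right-hand side would exceed $1$, the neuron receives on average more than $v_{th,l}$ of drive per step; I would argue that starting from some finite time every step triggers a spike (so $\bm{s}_l(t) \equiv 1$), giving $\bm{r}_l = 1$ directly. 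In the \emph{lower-saturation regime}, where the drive is negative on average, I would show that $\bm{m}_l(t)$ eventually stays below $v_{th,l}$ forever, so $\bm{s}_l(t) \equiv 0$ after some finite time and hence $\bm{r}_l = 0$.

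Putting these three cases together matches exactly the piecewise definition of $\mathrm{clip}(\cdot,0,1)$, which completes the argument. The main obstacle I expect is the saturation analysis: the identity above is valid for all $T$, but the residual term $\bm{v}_l(T)/T$ is only trivially $o(1)$ when the membrane potential is bounded. In the saturated regimes $\bm{v}_l(T)$ can drift linearly in $T$, so the vanishing-residual step no longer works directly, and one must instead reason about $\bm{r}_l(T)$ itself being pinned to $0$ or $1$ for all sufficiently large $t$. A careful case split on the sign and magnitude of $W_{l-1}\bm{r}_{l-1} + \bm{b}_{l-1} - v_{th,l}$, together with the monotonicity of the accumulated input in the boundary cases, is what makes the clip emerge rather than a pure linear relation.
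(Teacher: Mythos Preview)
Your telescoping derivation is identical to the paper's: combine the three lines of Eq.~\ref{eq:neuron} into the per-step identity $\bm{v}_l(t)-\bm{v}_l(t-1)=W_{l-1}\bm{s}_{l-1}(t)+\bm{b}_{l-1}-v_{th,l}\bm{s}_l(t)$, sum over $t$, and divide by $t\,v_{th,l}$ to obtain $\bm{r}_l(t)=(W_{l-1}\bm{r}_{l-1}(t)+\bm{b}_{l-1})/v_{th,l}-\bm{v}_l(t)/(t\,v_{th,l})$. Where you diverge is in how the clip is obtained. The paper does \emph{not} perform your three-regime case split; it simply asserts that $\bm{v}_l(t)\in[0,v_{th,l}]$ from the reset rule, so the residual vanishes as $t\to\infty$, and then notes that $\bm{r}_l\in[0,1]$ (since $\bm{s}_l$ is binary) to arrive at the clipped formula directly. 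Your analysis is more careful --- the drift issue you flag is real, and the paper's boundedness claim on $\bm{v}_l(t)$ is not literally guaranteed by a single soft reset per step when the input exceeds $v_{th,l}$ or is negative --- but the overall route is the same, and the paper's shortcut is what appears in the published proof.
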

 With Lemma 1 and Eq.~\ref{ann}, We can derive the theorem for conversion from ANN to SNN:

\begin{theorem}
\label{cor:ann_snn}
For an $L$-layered ANN with the ReLU activation and an $L$-layered SNN with the reset-by-subtraction neurons, assume that $W^{ \text{ANN}}_{l-1},\bm{b}^{\text{ANN}}_{l-1}$ are the parameters for layer $l-1$ of \replaced{the}{an} ANN, 
and $W^{\text{SNN}}_{l-1},\bm{b}^{\text{SNN}}_{l-1}$ are the parameters for layer $l-1$ of \replaced{the}{an} SNN. $\max_{l}$ is the maximum activation of layer $l$ in ANN, and $v_{th,l}$ is the firing threshold of layer $l$ in SNN. The ANN can be converted to the SNN \added{when $t \rightarrow \infty$} (Eq.~\ref{ann} equals Eq.~\ref{eq:accu_rate})
if for $l=1,2,...,L$, the following equations hold:
\begin{align}
\label{eq:equivalent_condition}
    \frac{W^{\text{SNN}}_{l-1}}{v_{th,l}} = W^{\text{ANN}}_{l-1} \frac{\max_{l-1}}{\max_{l}},~~~
        \frac{\bm{b}^{\text{SNN}}_{l-1}}{v_{th,l}} = \frac{\bm{b}^{\text{ANN}}_{l-1} }{\max_{l}}.
\end{align}
\end{theorem}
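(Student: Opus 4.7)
The plan is to prove Theorem~\ref{cor:ann_snn} by induction on the layer index $l$, with the induction hypothesis being that the SNN firing rate is the normalized ANN activation, i.e. $\bm{r}_l = \bm{a}_l / \max_l$. This ansatz is forced by the intuition that firing rates live in $[0,1]$ while ReLU activations live in $[0,\max_l]$, so division by $\max_l$ is the only natural bijection between the two ranges.

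For the base case $l=0$, I would note that the input layer can be encoded so that $\bm{r}_0 = \bm{a}_0/\max_0$ (this is exactly the standard intensity-to-rate encoding), which anchors the induction. For the inductive step, assume $\bm{r}_{l-1} = \bm{a}_{l-1}/\max_{l-1}$. Substituting into the firing-rate recursion from Lemma~\ref{theorem:firing_rate} and then plugging in the two conversion equations~(\ref{eq:equivalent_condition}) gives
\begin{align*}
\bm{r}_l
&= {\rm clip}\!\left(\frac{W^{\text{SNN}}_{l-1}\,\bm{a}_{l-1}/\max_{l-1} + \bm{b}^{\text{SNN}}_{l-1}}{v_{th,l}},0,1\right)\\
&= {\rm clip}\!\left(\frac{W^{\text{ANN}}_{l-1}\,\bm{a}_{l-1}+\bm{b}^{\text{ANN}}_{l-1}}{\max_{l}},0,1\right),
\end{align*}
so the weight/bias scalings cancel exactly and leave the pre-activation of the ANN scaled by $1/\max_l$.

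It then remains to argue that this clipped quantity coincides with $\bm{a}_l/\max_l = \max(W^{\text{ANN}}_{l-1}\bm{a}_{l-1}+\bm{b}^{\text{ANN}}_{l-1},0)/\max_l$. I would split on sign componentwise: when the pre-activation is negative the $\max$ zeroes it out and the $\mathrm{clip}$ does the same by its lower bound $0$; when it is nonnegative, $\bm{a}_l$ equals the pre-activation itself, and by definition of $\max_l$ as the maximum activation value in layer $l$ we have $\bm{a}_l/\max_l \in [0,1]$, so the clip acts as the identity. Hence $\bm{r}_l = \bm{a}_l/\max_l$, closing the induction, and the equivalence between Eq.~(\ref{ann}) and Eq.~(\ref{eq:accu_rate}) follows.

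The main obstacle is not the algebra, which is essentially cancellation, but rather justifying the upper-clip case: one has to verify that the inductive propagation never produces a pre-activation exceeding $\max_l$ in the normalized frame, which rests on the definition of $\max_l$ as an a priori bound on ReLU outputs in layer $l$ over the data distribution. A secondary subtlety is the base case, where some care is needed to state precisely what is meant by an ``input'' to the SNN so that $\bm{r}_0 = \bm{a}_0/\max_0$ holds; I would handle this by adopting the standard convention that the first layer receives currents proportional to the analog input, which is consistent with Eq.~(\ref{eq:neuron}) and with Lemma~\ref{theorem:firing_rate} applied at $l=1$.
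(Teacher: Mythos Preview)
Your proposal is correct and takes essentially the same approach as the paper. The paper introduces the normalized activation $\bm{z}_l=\bm{a}_l/\max_l$, rewrites the ReLU recursion as a $\mathrm{clip}$ recursion (using that $\bm{a}_l\le\max_l$ makes the upper clip inert), and then compares it term-by-term with Lemma~\ref{theorem:firing_rate} to read off the conditions; your version simply makes the implicit layer-by-layer matching into an explicit induction with the same hypothesis $\bm{r}_l=\bm{a}_l/\max_l$ and the same sign/upper-bound case analysis.
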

The proof of Theorem~\ref{cor:ann_snn} is presented in the Appendix. Eq.~\ref{eq:equivalent_condition} implies that scaling operations are necessary to convert ANN to SNN, either scaling weights (i.e. weight normalization) or setting thresholds (i.e. threshold balancing). In this sense, weight normalization is equivalent to threshold balancing.

\subsection{Rate Norm Layer}
\label{sec:rnl}
The choices of scaling factors are often empirical, and post-training \cite{rueckauer2016theory,han2020rmp}. To overcome this, we propose Rate Norm Layer (RNL) to replace the ReLU activation in ANN. The idea is to use a clip function with a trainable upper bound to output the simulated firing rate, which is the limitation of actual firing rate\deleted{s} in SNN \replaced{when inference time $T \rightarrow \infty$}{as time step increases}. \added{Here we denote the simulated firing rate as $\hat{\bm{r}}_l$ ,} Rate Norm Layer can be formally expressed as follows:
\begin{align}
\label{rnl}
    {\theta}_l &= p_l \cdot \max(W_{l-1} \replaced{\hat{\bm{r}}_{l-1}}{\bm{z}_{l-1}}+\bm{b}_{l-1}),  \nonumber\\
	\bm{z}_l &={\rm clip}(W_{l-1} \replaced{\hat{\bm{r}}_{l-1}}{\bm{z}_{l-1}}+\bm{b}_{l-1},0,{\theta}_l), \\
	\hat{\bm{r}}_l &= \frac{\bm{z}_l}{\theta_l} \deleted{\quad \{ \textbf{simulated~firing~rate~in~ANN} \}}, \nonumber
\end{align}
where \replaced{$p_l$ is a trainable \replaced{scalar}{parameter} $(p_l \in [0,1])$, and $\theta_l$ is the threshold of the $l$-th layer}{$\theta_l$ is a dynamic threshold of the $l$-th layer. $p_l$ is a trainable parameter and is restricted in $[0,1]$}. \replaced{With Theorem~\ref{cor:ann_snn} satisfied ($v_{th,l}=\theta_l$) and $p_l=1$, one can find that Eq.~\ref{rnl} is equivalent to Eq.~\ref{eq:accu_rate}}{Comparing Eq.~\ref{eq:accu_rate} and Eq.~\ref{rnl}, one can find that an ANN with RNL can be directly converted to an SNN without any scaling}. \replaced{In this case}{When $p_l=1$}, RNL will degenerate to the Max Norm algorithm, which scales the weight $W_{l-1}$ by $\frac{\max_{l-1}}{\max_{l}}$ and the bias $\bm{b}_{l-1}$ by $\frac{1}{\max_{l}}$. \added{A diagram comparison of different scaling schemes can be seen in Fig.~\ref{fig:rnl_response}.} \deleted{\replaced{Analyses in Section \ref{subsec:opt_time}}{The following analysis} will show that if the optimization \replaced{for fast inference}{of inference time} is deployed, the upper limit of $p_l$ may be unnecessary. }\deleted{A \added{diagram }comparison of different scaling schemes can be seen in the appendix. }For mini-batch training, different batches have different maximum outputs. To reduce the perturbation caused by data sampling, \replaced{$\rm running\_max(W_{l-1} \hat{\bm{r}}_{l-1}+\bm{b}_{l-1})$}{${\rm running\_max}(\bm{z}_l)$} is used instead of \replaced{$\max(W_{l-1} \hat{\bm{r}}_{l-1}+\bm{b}_{l-1})$}{$\max(\bm{z}_l)$}. 
\begin{figure}[t]
	\begin{center}
		\includegraphics[width=0.7\linewidth,trim={10 10 10 10},clip]{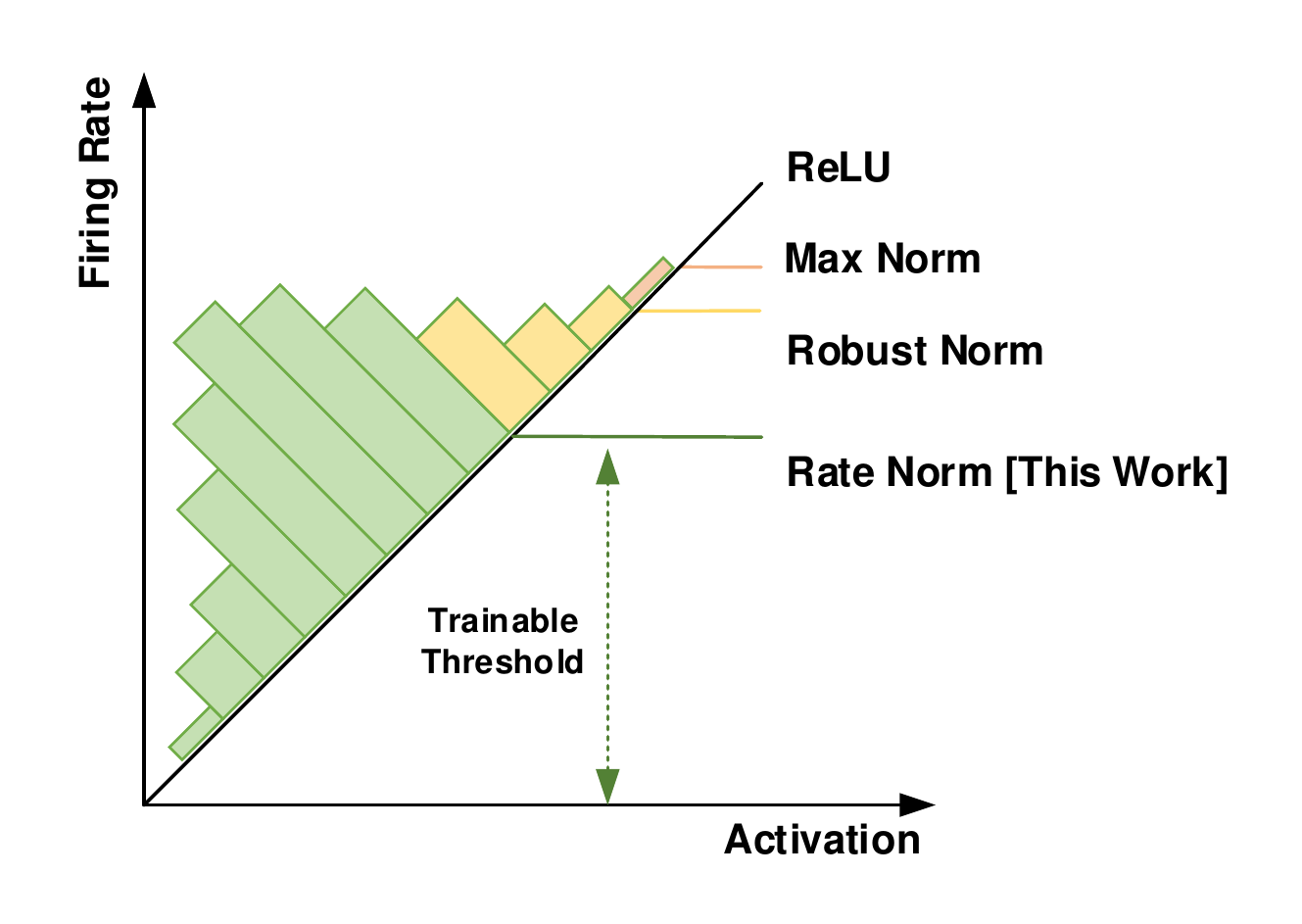}
	\end{center}
	\caption{Response of simulated firing rate of Rate Normalization Layer with regard to ANN activation. The trainable threshold can adapt during training, and manual setting can be cancelled.}
	\label{fig:rnl_response}
\end{figure}
The design of the Rate Norm Layer mainly considers the following three factors:
\begin{itemize}
	\item Compared with directly cutting the \added{simulated }firing rate to 1, the backpropagation becomes more effective in RNL training. \replaced{$\max(W_{l-1} \hat{\bm{r}}_{l-1}+\bm{b}_{l-1})$ and $\rm running\_max(W_{l-1} \hat{\bm{r}}_{l-1}+\bm{b}_{l-1})$}{$\max(\bm{z}_l)$ and ${\rm running\_max}(\bm{z}_l)$} can enable the gradient to flow out smoothly. Their participation in threshold calculating is similar to Batch Norm (BN) in mini-batch training. However, RNL cannot replace BN, because BN rescales the data to a normal distribution, which is related to the characteristics of ANN. \deleted{RNL more guarantees the similarity with ReLU at the conversion level}.
	
	\item The threshold $\theta_l$ enables better generalization. Existing solutions mainly focus on using a subset of the training data for offline normalization. This will potentially influence the generalization of the scaled SNN for data out of subset. In contrast, $\theta_l$ uses all data in training, which can be used directly in SNN inference.
	
	\item The threshold $\theta_l$ is determined in training. For faster inference, Robust Norm requires empirical percentile\deleted{,} which RNL doesn't need\added{,} as $p_l$ is trainable. Certainly, just using ANN loss will not guide the model to reduce the inference delay. This also requires additional loss design, which will be shown later.
\end{itemize}

\subsection{Optimization for Fast Inference}
\label{subsec:opt_time}

\begin{figure}[t]
	\begin{center}
		\includegraphics[width=0.8\linewidth,trim={25 0 25 40},clip]{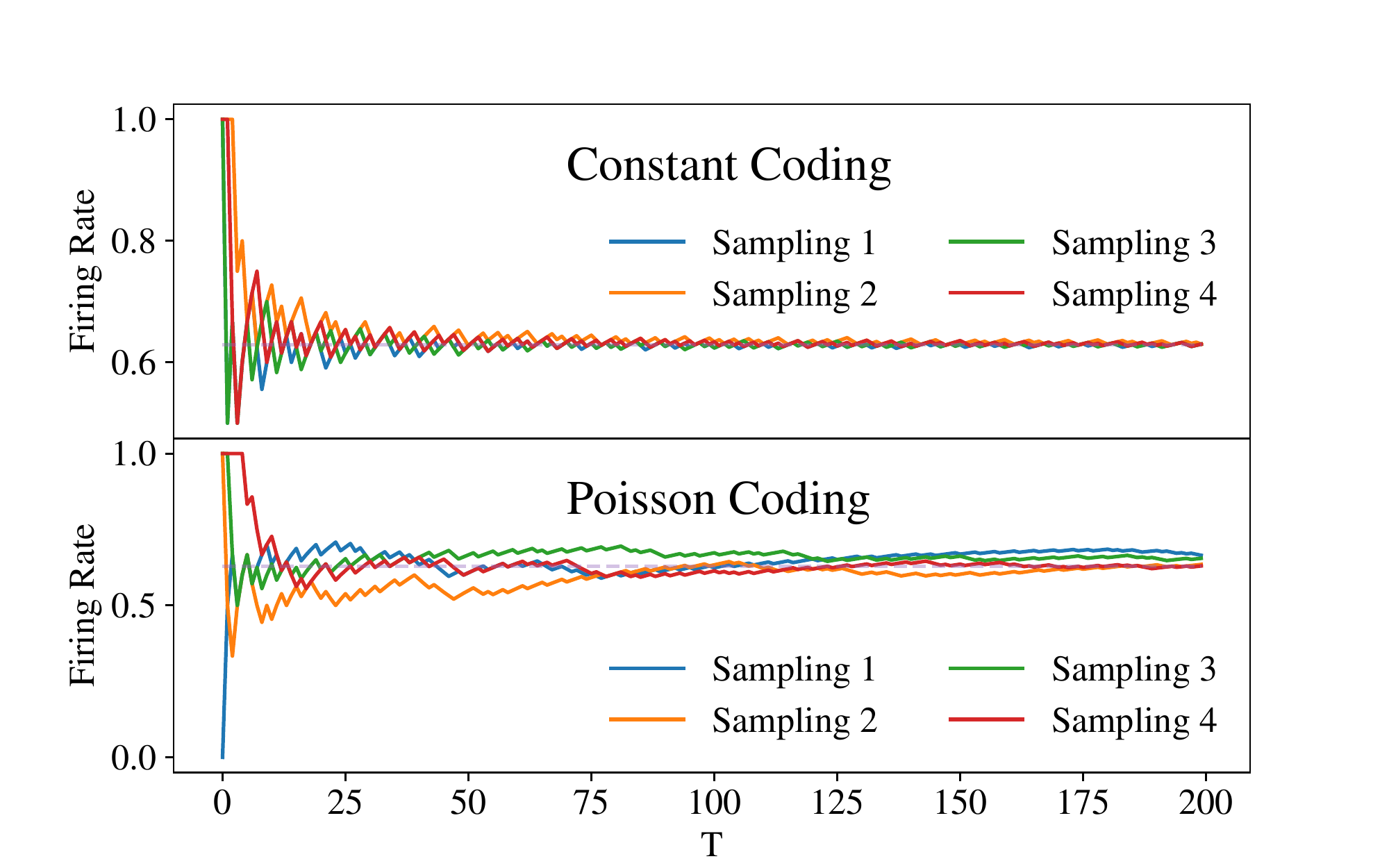}
	\end{center}
	\caption{Firing rate curves of output neurons in a toy SNN with different rate coding schemes. The membrane potential of spiking neurons has been randomly initialized from independent samplings.}
	\label{fig:rate_based_line}
\end{figure}

\replaced{Rate}{The rate}-based SNN model\added{s} take\deleted{s} rate coding as input, and the time average of the output spikes as output. In the conversion method\added{s}, the coding scheme mainly consists of two ways. One is Poisson coding, of which spikes obey the Poisson process. The other is constant coding, to cooperate with the reset-by-subtraction neuron. Constant coding is not a new thing, it can be regarded as an integrating ADC in the signal processing \cite{eng1994multiple}. 
%The input analog value will be encoded as spikes by the first neuron layer of a network. 
The two primary rate coding forms take time for the firing rate to approach its expectations. For the constant coding, its accumulated current needs to be rounded down when converted to spike count\added{s}. So the firing rate will be jagged and approach the analog value (Fig.~\ref{fig:rate_based_line}). 
%Curves in the figure present the output firing rate of the same neuron using different coding schemes. 
Using both codings will bring about unpredictable rate output in the first few time-steps. Following the suggestions of previous literature, constant coding is chosen as the primary scheme in this paper.

The time when the output firing rate of an SNN matches the analog output of an \replaced{ANN}{SNN} is referred to as ``inference time'', ``inference latency'' or ``inference delay'' \cite{neil2016learning}. Fig.~\ref{fig:rate_based_line} implies that for both rate coding schemes, there will be an output delay. In deep neural networks, the stacking of layer-by-layer delays will bring greater inference delay. For example, ResNet-44 requires 350 time-steps to achieve the best accuracy \cite{hu2018spiking}. This problem is not limited to ANN-SNN Conversion. The \replaced{BPTT}{directly} trained SNN model also has similar issues. 

Now that the reason for slow inference is attributed to the deviation of the encoder accumulation. To further analyze the characteristics, we propose to use $K(\hat{\bm{r}},\bm{r}(t))$ (K curve) to quantify the relationship between the simulated firing rate $\hat{\bm{r}}$ of ANN and the real firing rate $\bm{r}(t)$ of SNN after conversion. 
\begin{align}
	K(\hat{\bm{r}},\bm{r}(t))= \frac{{\Vert \bm{r}(t) - \hat{\bm{r}} \Vert}_2^2}{{\Vert \hat{\bm{r}} \Vert}_2^2}.
\end{align}
Note that the design of $K$ resembles the chi-square test\deleted{s} in hypothesis testing. $\hat{\bm{r}}$ and $\bm{r}(t)$ denote the firing rates of all \replaced{neurons}{the neuron} in a certain layer. $\Vert \cdot \Vert_2$ indicates \deleted{the }$L^2$ norm. The denominator ${{\Vert \hat{\bm{r}} \Vert}_2^2}$ makes $K(\hat{\bm{r}},\bm{r}(t))$ have scale invariance. Therefore, we can compare the fitting of the firing rate between different layers. Ideally, given enough ``inference time'', $K(\hat{\bm{r}},\bm{r}(t))$ will converge to 0. We believe K curve is an adequate metric as the neuron population encoding information is considered rather than any single neuron.

\begin{figure}[t]
    \scriptsize
    \begin{overpic}[width=1\linewidth,trim={90 -30 110 35},clip]{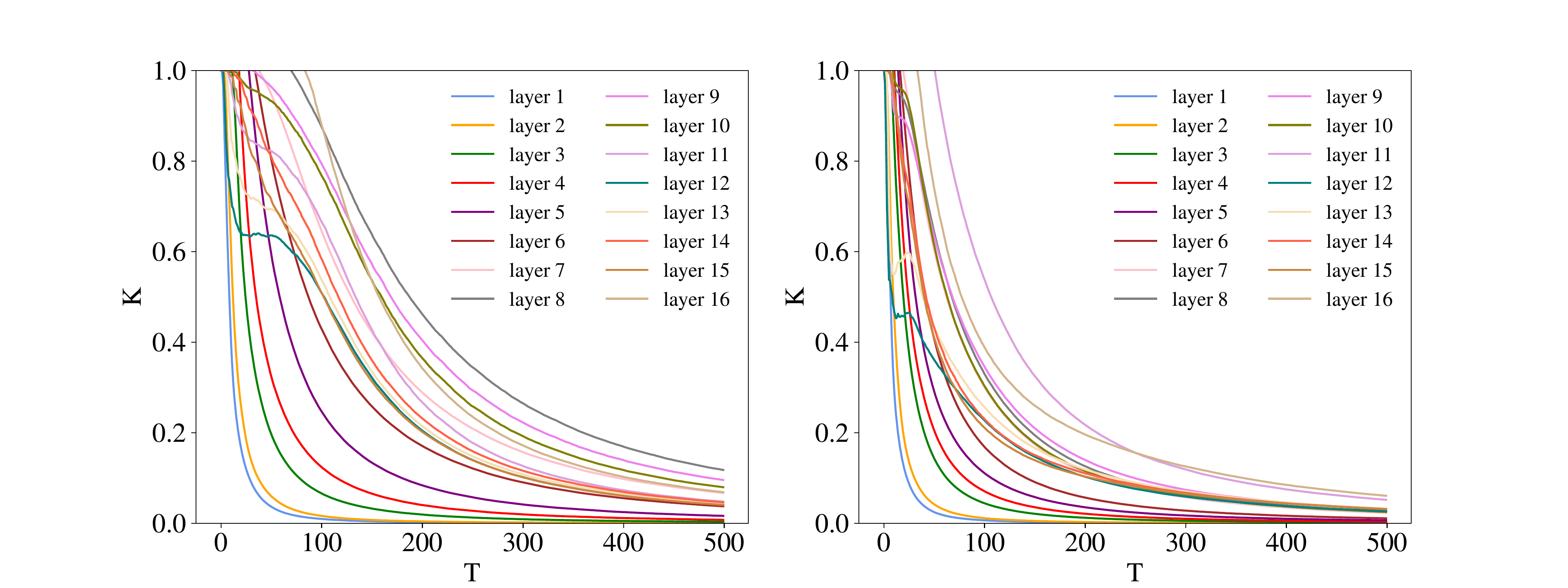}
		\put(25.5,1){(a)}
		\put(76.5,1){(b)}
	\end{overpic}
	\caption{Layer-wise K curves over time of \added{a }VGG16 model. (a) The result of SNN using the Max Norm algorithm; (b) The result of scaling the threshold\added{s} by 0.8.}
	\label{fig:opt_inference}
\end{figure}

Specifically, Fig.~\ref{fig:opt_inference} gives an example to illustrate how the K curve fits between different layers of \added{a }VGG16 and \added{a }converted SNN. An image is used for reasoning and calculating K curves. As the layer deepens, the convergence speed of the K curve becomes slower. \deleted{The time $t_s$ at which the deep curve begins to fall continues to be delayed. It indicates that not until $t_s$ do the spikes generate. }By accelerating the convergence of the K curve, the inference can speed up. Here we derive one of the upper bound for $K(\hat{\bm{r}},\bm{r}(t))$.

\begin{theorem}
\label{theorem:upper_bound}
For layer $l$ in an ANN and the converted SNN with constant coding, given the simulated firing rate $\hat{\bm{r}}_l$ and the real firing rate $\bm{r}_l$, we have:
\begin{equation}
    \label{eq:upper_bound}
    K_l < \frac{2\Omega_l}{t}
    ,
\end{equation}
where $K_l$ denotes the abbreviation of $K(\hat{\bm{r}}_l,\bm{r}_l(t))$ in layer $l$, $\Omega_l=\frac{ \Vert \hat{\bm{r}}_l \Vert_1 }{\Vert \hat{\bm{r}}_l \Vert_2^2} $. $\Vert \cdot \Vert_p$ denotes $L^p$ norm.
\end{theorem}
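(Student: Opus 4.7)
The plan is to exploit two facts. First, under constant coding the cumulative spike count $N_{l,i}(t)$ is an integer forced to lie within distance $1$ of $\hat r_{l,i}\cdot t$, so the per-neuron firing-rate error is $O(1/t)$. Second, because $r_{l,i}(t)$ never exceeds $\hat r_{l,i}$ (soft reset with nonnegative initial potential), the squared error is linear rather than quadratic in $\hat r_{l,i}$, which is what produces the $\|\hat{\bm r}_l\|_1$ factor in $\Omega_l$. I would split the argument into a telescoping identity, a pointwise algebraic inequality, and a final summation.

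First, summing the three lines of Eq.~\ref{eq:neuron} over $\tau=1,\ldots,t$ collapses the membrane-potential recursion into
\begin{equation*}
v_{th,l}\,N_{l,i}(t) \;=\; \sum_{\tau=1}^{t} I_{l,i}(\tau) + v_{l,i}(0) - v_{l,i}(t),
\end{equation*}
with $I_{l,i}(\tau)=W_{l-1,i\cdot}\bm s_{l-1}(\tau)+b_{l-1,i}$. For constant coding $\frac{1}{t}\sum_\tau I_{l,i}(\tau)=c_{l,i}=v_{th,l}\,\hat r_{l,i}$, so dividing by $t\,v_{th,l}$ gives $r_{l,i}(t)=\hat r_{l,i}+(v_{l,i}(0)-v_{l,i}(t))/(t\,v_{th,l})$. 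With $\bm v_l(0)=\bm 0$ and the soft-reset rule keeping $v_{l,i}(t)\in[0,v_{th,l})$, one obtains the pointwise bound $0\le\hat r_{l,i}-r_{l,i}(t)<1/t$; the clipped regimes $\hat r_{l,i}\in\{0,1\}$ are handled separately and produce either identical error or no error at all.

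The key algebraic step is the inequality $(x-y)^2\le|x-y|(x+y)$ valid for $x,y\ge 0$. Setting $x=\hat r_{l,i}$ and $y=r_{l,i}(t)\le\hat r_{l,i}$ and using both $|x-y|<1/t$ and $x+y\le 2\hat r_{l,i}$ yields $(\hat r_{l,i}-r_{l,i}(t))^2<2\hat r_{l,i}/t$. Summing over neurons in layer $l$ gives $\|\bm r_l(t)-\hat{\bm r}_l\|_2^2<2\|\hat{\bm r}_l\|_1/t$, and dividing by $\|\hat{\bm r}_l\|_2^2$ produces the claimed $K_l<2\Omega_l/t$.

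The main obstacle I foresee is that $I_{l,i}(\tau)$ is not literally constant in time when $l>1$: its time average $\frac{1}{t}\sum_\tau I_{l,i}(\tau)=W_{l-1,i\cdot}\bm r_{l-1}(t)+b_{l-1,i}$ equals $c_{l,i}$ only in the limit, and the crucial inequality $r_{l,i}(t)\le\hat r_{l,i}$ is no longer automatic once the incoming weights are allowed to be negative. The cleanest resolution, implicit in how the single-layer K curves in Fig.~\ref{fig:opt_inference} are drawn, is to state the layer-$l$ bound conditional on $\bm r_{l-1}(t)=\hat{\bm r}_{l-1}$; a fully rigorous multi-layer version would fold the previous-layer $O(1/t)$ residual into an induction on $l$. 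A secondary technicality is that a strictly negative pre-activation can push $v_{l,i}(t)$ below zero, but this is absorbed into the strict inequality by replacing $v_{l,i}(t)\in[0,v_{th,l})$ with $|v_{l,i}(t)|<v_{th,l}$ in the one-step bound.
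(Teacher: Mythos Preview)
Your proof is correct and follows essentially the same route as the paper: both arguments establish the pointwise bound $0\le \hat r_{l,i}-r_{l,i}(t)<1/t$ and then use the elementary inequality $(\hat r_{l,i}-r_{l,i}(t))^2\le 2\hat r_{l,i}(\hat r_{l,i}-r_{l,i}(t))<2\hat r_{l,i}/t$ before summing and normalising. The only difference is cosmetic: the paper simply asserts the floor-function approximation $r_{l,i}(t)\approx\lfloor \hat r_{l,i}t\rfloor/t$ to get the pointwise bound, whereas you derive it from the telescoped membrane-potential recursion; your explicit acknowledgement of the $l>1$ gap (non-constant input current) is well placed, since the paper's proof carries exactly the same unstated single-layer assumption.
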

The detailed proof of Theorem~\ref{theorem:upper_bound} is described in the Appendix.
$\Omega_l$ is named as Rate Inference Loss (RIL) of the $l$-th layer. Eq.~\ref{eq:upper_bound} indicates that $K_l$ is less than an inverse proportional curve to $t$.

For the convergence of the last layer (the $L$-th layer), if $\Omega_L$ reduces, $K_L$ will converge faster to 0 due to the Squeeze Theorem. That is, the real firing rate of the SNN approaches the simulated value of the ANN more faster, leading to faster and more stable outputs of SNN. However, considering that the network actually has inference delays layer by layer, a better solution is to reduce the average value of $\Omega_l  (l=1,2,\dots,L)$. Thus, the overall training loss is composed of the loss related to the task and RIL multiplied by hyperparameter $\lambda$.
\begin{align}
	\mathscr{L}'(f(\bm{x}),\bm{y})=\mathscr{L}(f(\bm{x}),\bm{y})+ \lambda \frac{\sum{\Omega_l}}{L},
	\label{eq:total_loss}
\end{align}
where $(\bm{x},\bm{y})$ is data tuple for training and $f(\cdot)$ is the network with L Rate Norm Layers. Based on Eq.~\ref{rnl}, we are able to calculate the partial derivative of $\Omega_L$ w.r.t.~$p_L$, and obtain:
\begin{align}
	\frac{\partial \Omega_L}{\partial p_L} &= \sum_i{\frac{\partial \Omega_L}{\partial \hat{\bm{r}}_{L,i}}  \frac{\partial \hat{\bm{r}}_{L,i}}{\partial p_L}  } \nonumber\\
	&= \sum_i{\bigg[ \bigg( \frac{  \Vert \hat{\bm{r}}_{L}\Vert_2^2  -2\hat{\bm{r}}_{L,i} \Vert  \hat{\bm{r}}_{L,i} \Vert_1    }{\Vert \hat{\bm{r}}_{L} \Vert_2^4} \bigg) \bigg( - \frac{\hat{\bm{r}}_{L,i}}{p_L} \bigg) \bigg] } \nonumber\\
	&= \frac{\Vert \hat{\bm{r}}_L \Vert_1 }{p_L \Vert \hat{\bm{r}}_L \Vert_2^2  },
	\label{eq:partial}
\end{align}
where $\hat{\bm{r}}_{L,i}$ denotes the $i$-th element of $\hat{\bm{r}}_{L}$. %, $N$ denotes the elements number of $\hat{\bm{r}}_L$.

Eq.~\ref{eq:partial} implies that the partial derivative of $\Omega_L$ w.r.t. $p_L$ is positive. Simply minimizing $\Omega_L$ will reduce the neurons’ $p_l$. The upper limit of $p_l$ is unnecessary in this sense. Nevertheless, this will lead more neurons to saturation state and \replaced{lose}{loss} model accuracy. \replaced{Thus, we jointly optimize the two losses and tune the hyper-parameter $\lambda$ to reach the optimal trade-off between model accuracy and conversion loss.}{Thus, joint learning of task loss and rate inference loss will greatly alleviate the problem.}

So far, the current theories and analyses can also systematically explain the findings of Han \etal That is, reasonable scaling of the threshold can speed up the inference. If the threshold is set smaller, the Rate Inference Loss will decrease. Since the curve \added{value }and time are inversely proportional, it is equivalent to accelerating the decrease of the curve. On the contrary, if the threshold is too large, many neurons in the SNN will have \deleted{a }low firing rate\added{s} to produce accurate output. Of course, the threshold should not be too small to prevent more neurons from saturating and losing information. Fig.~\ref{fig:opt_inference}(b) shows the K curve after adjusting the threshold using the algorithm of Han \etal

\subsection{Training for Accurate and Fast SNN}

\begin{figure}[t]
	\begin{center}
		\begin{overpic}[width=1\linewidth]{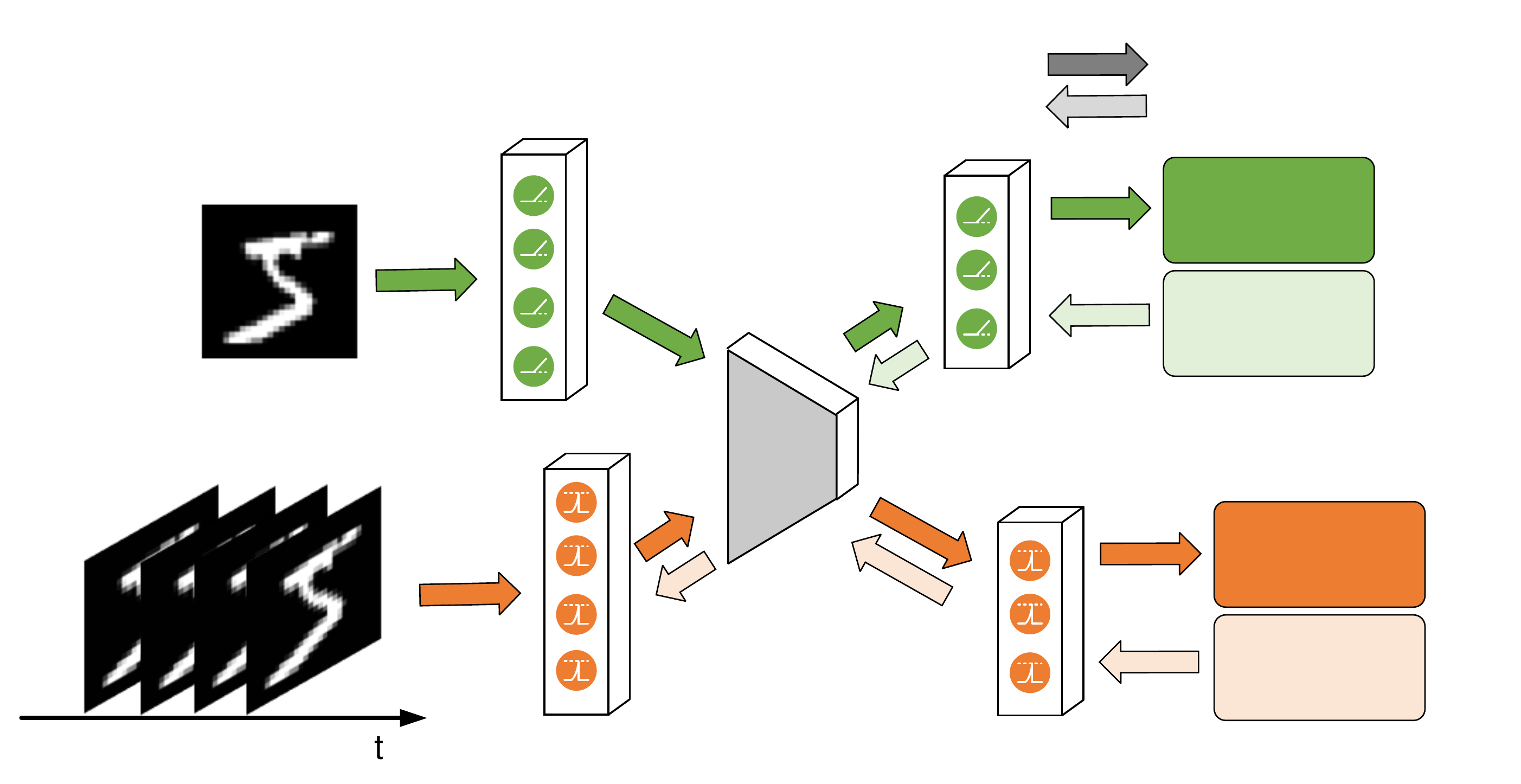}
			\put(47,45){\replaced{Stage 1}{ANN}}
			\put(49,2){\replaced{Stage 2}{SNN}}
			
			\put(36,2){\tiny \textbf{Layer i}}
			\put(66,1.8){\tiny \textbf{Layer j}}
			\put(77,46.5){\tiny \textbf{Forward Pass}}
			\put(77,44){\tiny \textbf{Backward Pass}}
			\put(82,37){\small $\hat{r}_j^*$}
			\put(81,31.5){\tiny $\min$}
			\put(78,28.5){\tiny $\mathscr{L}(\hat{r}_j;y)$}
			
			\put(85,15){\small $\hat{r}_j'$}
			\put(84,9){\tiny $\min$}
			\put(80,6){\tiny $\mathscr{T}(\hat{r}_j^*,\hat{r}_j')$}
			
			\put(44,9){\tiny \textcolor[rgb]{0.5,0.54,0.52}{$\frac{\partial \mathscr{T}}{\partial \theta_i}$}}
			\put(74,4){\tiny \textcolor[rgb]{0.5,0.54,0.52}{$\frac{\partial \mathscr{T}}{\partial \theta_j}$}}
			\put(58,23){\tiny \textcolor[rgb]{0.5,0.54,0.52}{$\frac{\partial \mathscr{L}}{\partial w}$}}
		\end{overpic}
	\end{center}
	\caption{Diagram of SNN training. The training consists of two stages. It first \replaced{trains for accuracy}{minimizes precision loss} by adjusting weights. Then it optimizes for fast inference by adjusting firing thresholds.}
	\label{fig:net_flow}
\end{figure}

In Section~\ref{sec:rnl}, the $p_l$ of the Rate Norm Layer is restricted to $[0,1]$. This means if the Rate Norm Layer is directly trained, the simulated firing rate after clipping and scaling will inevitably bear information loss. The information loss here indicates that the output distribution due to cropping is different from the original distribution. This will make it difficult to take advantage of ANN's performance. So the training strategy needs to be carefully designed. When $p_l=1$ is fixed, $\bm{z}_l$ is clipped with ${\rm running\_max}(\bm{z}_l)$, which has less information loss. \replaced{After}{When} the network\added{ synaptic} parameters are fixed, the neuron threshold starts training. The training goal at this stage is to reduce information loss and to reduce Rate Inference Loss. As the output  $f(\bm{x})$ of an ANN is $\hat{\bm{r}}_L$, the goal is to optimize (according to Eq.~\ref{eq:total_loss}):
\begin{align}
	\mathscr{L}'(\hat{\bm{r}}_L,\bm{y})=\mathscr{L}(\hat{\bm{r}}_L,\bm{y})+\lambda\frac{\sum{\Omega_L}}{L},
\end{align}
we decompose this goal into two stage goals (shown in Fig.~\ref{fig:net_flow}):

Stage 1 is accuracy training, when network outputs $\hat{\bm{r}}^*_L$. The target is:
\begin{align}
	\min_{W,b} \mathscr{L}(\hat{\bm{r}}^*_L,\bm{y} ).
\end{align}

Stage 2 is for fast inference. Assume the output of this stage is $\hat{\bm{r}}'_L$. Then the target is: 
\begin{align}
	&\min_{\theta} \mathscr{T}(\hat{\bm{r}}_j^*,\hat{\bm{r}}_j'), \nonumber\\
    \mathscr{T}(\hat{\bm{r}}_j^*,\hat{\bm{r}}_j') =& 1- Cos(\hat{\bm{r}}^*_L,\hat{\bm{r}}'_L) + \lambda \frac{\sum{\Omega_L}}{L}.
    \label{eq:fast_loss}
\end{align}
The cos distance is used to maintain neuron information. \replaced{The detailed training and converting algorithm is described in Algorithm~\ref{algo:training} in the Appendix}{The detailed training algorithm is described in the appendix}.

\section{Experiments}

\subsection{Experiment Implementation}
We validate our methods on the image recognition benchmarks, namely the MNIST\footnote{http://yann.lecun.com/exdb/mnist/}, CIFAR-10, CIFAR-100\footnote{https://www.cs.toronto.edu/~kriz/cifar.html} datasets. For MNIST, we consider a 7-layered CNN and AlexNet. For CIFAR-10, we use VGG-16 and PreActResNet-18 network structure\added{s}. It is worth noting that we did not use the common ResNet as we think PreActResNet will help the training of Rate Norm Layer\added{s} \cite{he2016identity}. For CIFAR-100, VGG-16, PreActResNet-18 and PreActResNet-34 are used. 
%In Stage 1, adjusted training hyperparameters are vital to achieving high accuracy. Setting appropriate $\lambda$ in Stage 2 is also essential for inference performance. 
\deleted{The choice of all hyperparameters is shown in Table S1 in the Appendix. }
%We choose the Max Norm algorithm \cite{rueckauer2017conversion} as the baseline and 
We present the simulation results and analysis in the following subsections.

\begin{table}
\centering
\scriptsize
\setlength\tabcolsep{5pt}
\begin{tabular}{lccc}
\toprule
\quad  & Network & SNN & Conversion\\
\quad  & \quad & Acc (\%) & Loss (\%)\\
%\midrule
\hline
\multicolumn{4}{c}{MNIST} \\
\hline
\cite{diehl2015fast} & Spiking NN & 98.6 & - \\
\cite{rueckauer2017conversion} & - & 99.44 & 0.00  \\
\textbf{This work (RNL)} & 7-Layered CNN & 96.51 & 0.00 \\
\textbf{This work (RNL)} & AlexNet & 99.46 & -0.04 \\
\hline
\multicolumn{4}{c}{CIFAR-10} \\
\hline
\cite{sengupta2019going} & ResNet-20 & 87.46 & +1.64 \\
\cite{sengupta2019going} & VGG-16 & 91.55 & +0.15 \\
\cite{hunsberger2015spiking} & - & 83.54 & +0.18 \\
\cite{cao2015spiking} & 7-Layered CNN* & 77.43 & +1.69 \\
\cite{han2020rmp} & ResNet-20 & 91.36 & +0.11 \\
\cite{han2020rmp} & VGG-16 & 93.63 & 0.00 \\
\cite{rueckauer2017conversion} & - & 88.82 & +0.05 \\
\textbf{This work (RNL)} & VGG-16 & 92.86 & -0.04 \\
\textbf{This work (RNL)} & PreActResNet-18 & 93.45 & -0.39  \\
\hline
\multicolumn{4}{c}{CIFAR-100} \\
\hline
\cite{han2020rmp} & VGG-16 & 70.93 & +0.29  \\
\cite{han2020rmp} & ResNet-20 & 67.82 & +0.90 \\
\textbf{This work (RNL)} & VGG-16 & 75.02 & +0.54 \\
\textbf{This work (RNL)} & PreActResNet-18 & 75.10 & -0.45  \\
\textbf{This work (RNL)} & PreActResNet-34 & 72.91 & +0.80  \\
\bottomrule
\end{tabular}
\caption{Best Accuracy Performance comparing with related methods. Values in the table represent the best accuracy and the accuracy loss of conversion ($\text{Acc}_{\text{ANN}} - \text{Acc}_{\text{SNN}}$). The structure of 7-Layered CNN is 32C3-P2-32C3-P2-32C3-P2-32FC10, which is different from the one with asterisk in the table.}
\label{tab:best_infer}
\end{table}

\subsection{Accuracy Performance}

We first evaluate the effectiveness of the proposed Rate Norm Layer. \replaced{A network}{Network} with Rate Norm Layer\added{s} is trainable with backpropagation. When testing SNN performance, all Rate Norm Layers are converted to I\&F neuron models with $\theta_l$ as the threshold. Table~\ref{tab:best_infer} shows the best accuracy of the converted SNNs compared with typical works.
The converted SNNs achieve \added{the }state-of-the-art performance on MNIST and CIFAR-100 datasets, and reach a similar performance\deleted{ level} on CIFAR-10. For VGG-16 trained by CIFAR-100, the proposed method reaches top-1 accuracy 75.02\%, whereas the state-of-the-art ANN-SNN algorithm reaches 70.93\%.
%However, comparing accuracy is secondary because the network can underfit. 
Conversion loss is considered here to evaluate the quality of the ANN-SNN conversion. As illustrated in Table~\ref{tab:best_infer},
ANN-SNN conversion with \deleted{the }Rate Norm Layer\added{s} has low conversion loss, or even negative conversion loss, indicating that the converted SNN \replaced{may outperform}{outperforms} the original ANN. In contrast, the loss is usually positive for other methods, meaning that the performance of the converted SNN is not as good as ANN.

\subsection{Fast Inference Performance}

\begin{figure}[t]
\scriptsize
	\begin{center}
	    \includegraphics[width=0.9\linewidth,trim={30 0 50 45},clip]{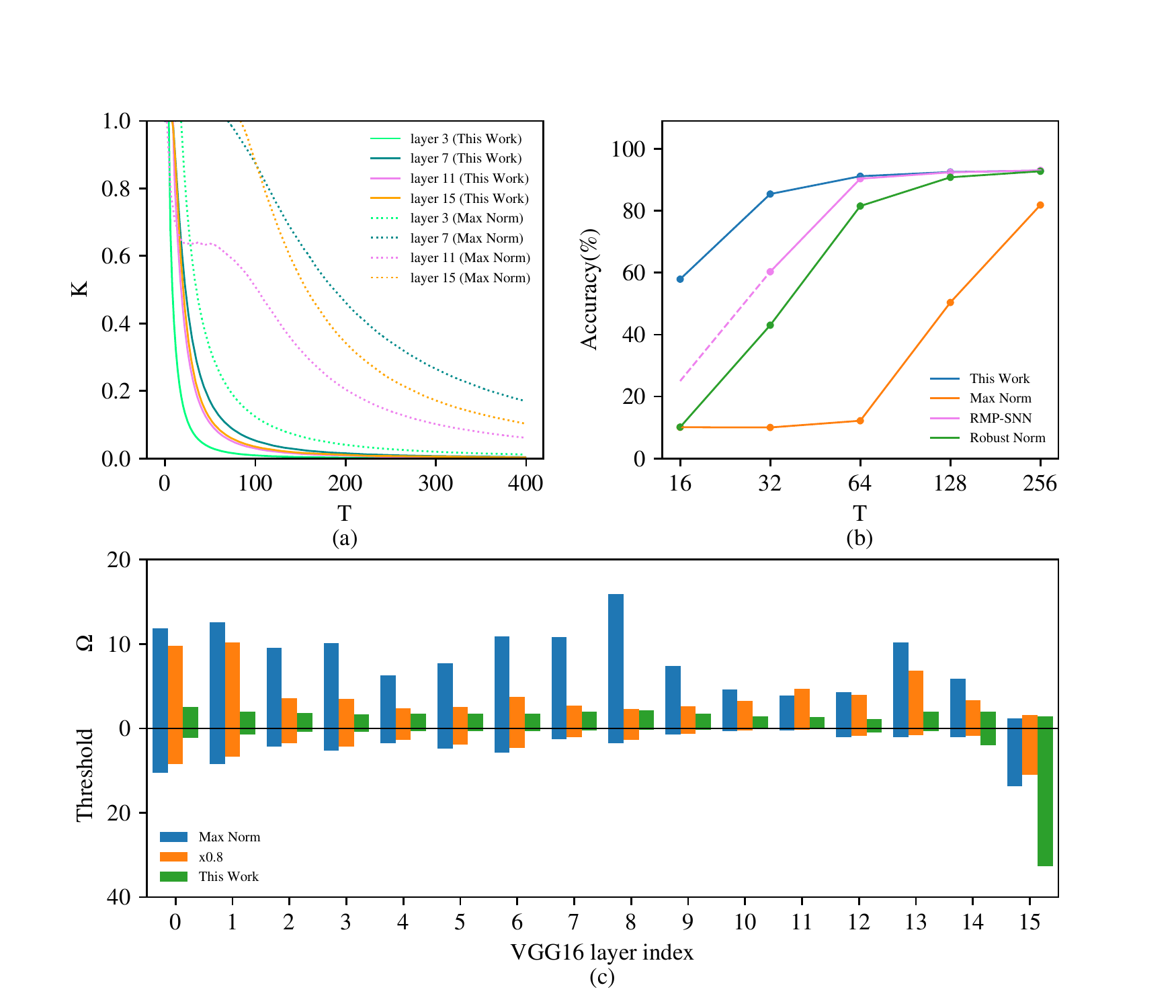}
	\end{center}
	\caption{Fast inference performance of SNNs converted from VGG-16 trained on CIFAR-10. (a) The K curve over time. For readability, 4 of the 16 layers of VGG-16 are extracted to display the K curve\added{s}. (b) Accuracy curves over time. The dotted line only indicates the tendency due to the lack of data. (c) $\Omega$ and threshold distribution.}
	\label{fig:fast_infer}
\end{figure}

%After testing accuracy, we evaluate the fast inference performance.
We test whether the proposed Rate Inference Loss can speed up inference, that is, speed up the convergence of the K curve. 
Fig.~\ref{fig:fast_infer}(a) and (b) show how the K curve and accuracy change over latency, where the dotted line \added{in Fig.~\ref{fig:fast_infer}(a) }represents the Max Norm method. As can be seen from Fig.~\ref{fig:fast_infer}(a),  the K curves of the proposed method converge to 0 quickly, which are much faster than those of the Max Norm method. Thus the proposed method can implement fast inference.  
The inference performance can be observed in Fig.~\ref{fig:fast_infer}(b). The proposed method reaches an accuracy of 85.40\% using 32 time-steps, whereas the methods of Max Norm, Robust Norm, and RMP-SNN reach 10.00\%, 43.03\% and 63.30\% at the end of 32 time-steps. Moreover, the proposed method achieves an accuracy above 90\% using only 52 time-steps, which is 8.6 times faster than Max Norm that uses 446 time-steps. Detailed accuracy comparison on time T is shown in Table~\ref{tab:fast_infer}. 

The threshold and $\Omega$ of VGG-16 are visualized in Fig.~\ref{fig:fast_infer}(c).  For the Max Norm method, as all the I\&F neurons use the same threshold 1,  we regard the maximum value of ReLU outputs as the equivalent threshold. It can be found that the distribution gap of the threshold is relatively large. But when paying attention to the $\Omega$ distribution, the $\Omega$ after the threshold scaling (orange column) is usually smaller than that of Max Norm (blue column). In contrast, training with Rate Inference Loss will keep $\Omega$ at a relatively low and average level, and thus benefit inference.

\begin{table}
\centering
\scriptsize
\setlength\tabcolsep{3.25pt}
\begin{tabular}{lllllll}
\toprule
\quad  & Network & 16 & 32 & 64 & 128 & 256 \\
\midrule
Max Norm & VGG-16 & 10.07 & 10.00 & 12.17 & 50.37 & 81.85 \\
\cite{rueckauer2017conversion} & PreActResNet-18 & 11.75 & 11.75 & 21.08 & 55.25 & 78.33 \\

Robust Norm & VGG-16 & 10.11 & 43.03 & 81.52 & 90.80 & 92.75 \\
\cite{rueckauer2017conversion} & PreActResNet-18 & 13.50 & 13.00 & 23.50 & 59.50 & 80.50 \\

\cite{han2020rmp} & VGG-16 & - & 60.30 & 90.35 & 92.41 & 93.04\\

\textbf{This work (RNL+RIL)} & VGG-16 & 57.90 & 85.40 & 91.15 & 92.51 & 92.95 \\
\textbf{This work (RNL+RIL)} & PreActResNet-18 & 47.63 & 83.95 & 91.96 & 93.27 & 93.41 \\
\bottomrule
\end{tabular}
\caption{Fast Inference Performance comparing with related methods. Values in the table represent the instant accuracy of latency T. All the networks are trained on CIFAR-10.}
\label{tab:fast_infer}
\end{table}

\subsection{Energy Estimation of Neuromorphic Hardware}

SNNs have a considerable potential on neuromorphic chips. One of the benefits is to reduce the energy budget. To study the energy efficiency of fast reasoning, we use the energy model proposed by Cao et al. \cite{cao2015spiking} to model the energy consumption on neuromorphic chips. Assume\deleted{d} that a spike activity would bring about the energy consumption of $\alpha$ Joules and 1 time-step takes $1ms$. Then the power model is defined as:
\begin{align}
	P = \frac{total \, spikes}{1\times 10^{-3}} \times \alpha \, (Watts)
\end{align}

The previous energy analysis mainly focused on total energy consumption during inference. However, in real application scenarios, the total energy consumed before the model reaches reliable accuracy is more important. In this regard, we evaluate the performance of the proposed method and the Max Norm method on \replaced{energy}{power}. Fig.~\ref{fig:power_analysis}(a) is the power histogram over time. Due to the lower threshold, the power of our model is relatively high. The integral of the deep color area represents the energy consumed to achieve 90\% accuracy. The energy consumption of the proposed model is only 0.265 times of the Max Norm method when it reaches 90\% accuracy. This means that the 8.6 $\times$ reasoning speedup will not bring much more energy consumption. Fig.~\ref{fig:power_analysis}(b) shows the logarithmic ratio of energy and inference speedup. Our model exhibits \added{the properties of }``fast reasoning'' and ``energy efficiency''.

\begin{figure}[t]
\scriptsize
	\centering
	\begin{overpic}[width=1.0\linewidth,trim={110 5 110 50},clip]{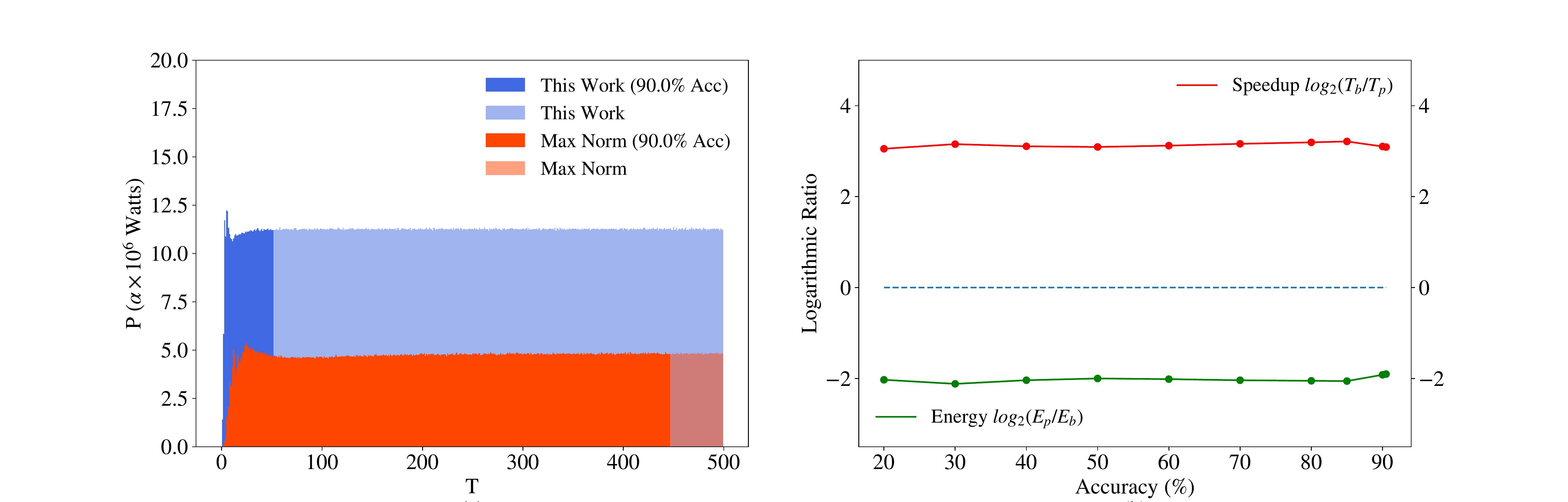}
		\put(25,-2.5){(a)}
		\put(76.5,-2.5){(b)}
	\end{overpic}
	\quad
	\caption{Energy analysis of VGG-16 on neuromorphic chips. (a) is the power histogram. The integral of the deep color area is the energy for accuracy to reach 90\%. (b) is the logarithmic ratio of energy consumption and inference speedup. `p' is for the proposed method, and `b' is for the baseline Max Norm method. }
	\label{fig:power_analysis}
\end{figure}

\section{Conclusions}

This paper proposes a method to \deleted{directly }convert conventional ANNs to SNNs. The Rate Norm Layer is introduced to replace ReLU for optimal conversion. Besides, we quantify the fit between \added{the }ANN activation and the firing rate of the converted SNN by an optimal fit curve. The inference time can be reduced by optimizing the coefficient of the upper bound of the fit curve, namely Rate Inference Loss. Thus, a \replaced{two-staged}{joint} learning scheme is proposed to obtain fast and accurate deep SNNs. Experimental results demonstrate that our methods achieve low accuracy loss and fast reasoning with deep structures such as VGG and PreActResNet.

\section*{Acknowledgment}
	This work was supported by the National Natural Science Foundation of China (62027804, 61825101, 62088102 and 61961130392).

\section*{Appendix A: Proofs}
\normalsize
\setcounter{equation}{0}
\setcounter{theorem}{0}
\setcounter{corollary}{0}
\setcounter{lemma}{0}

Here we provide the theoretic proofs of theorems and lemmas in this paper.
\begin{lemma} 
For a spiking neural network consist\replaced{ing}{s} of the reset-by-subtraction neurons mentioned in Eq.~3, assume that $W_{l-1}$ and $\bm{b}_{l-1}$ are the parameters for layer $l-1$. Then when $t \rightarrow \infty$, the relation of the firing rate $\bm{r}_l(t)$ and $\bm{r}_{l-1}(t)$ is given by:
\begin{align}
	\bm{r}_l = {\rm clip} \left (\frac{W_{l-1}\bm{r}_{l-1}+\bm{b}_{l-1}}{v_{th,l}},0,1 \right ),
% 	\label{eq:accu_rate}
\end{align}
where ${\rm clip}(x,0,1)=x$ when $x\in [0,1]$, ${\rm clip}(x,0,1)=1$ when $x>1$, and ${\rm clip}(x,0,1)=0$ when $x<0$. 
\end{lemma}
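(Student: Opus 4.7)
The plan is to collapse the three-line recursion in Eq.~\ref{eq:neuron} into a single scalar equation for the accumulated membrane potential, telescope it over $t=1,\dots,T$, divide by $T$ to extract firing rates, and then take $T\to\infty$. Specifically, substituting $\bm{m}_l(t)$ into the update for $\bm{v}_l(t)$ yields the one-step identity
\begin{equation*}
    \bm{v}_l(t) = \bm{v}_l(t-1) + W_{l-1}\bm{s}_{l-1}(t) + \bm{b}_{l-1} - v_{th,l}\bm{s}_l(t),
\end{equation*}
which is the only place where the soft-reset assumption enters: it ensures that the "excess" above threshold is preserved rather than discarded.

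Summing from $t=1$ to $T$ telescopes to
\begin{equation*}
    \bm{v}_l(T) - \bm{v}_l(0) = W_{l-1}\sum_{t=1}^T \bm{s}_{l-1}(t) + T\bm{b}_{l-1} - v_{th,l}\sum_{t=1}^T \bm{s}_l(t).
\end{equation*}
Dividing by $Tv_{th,l}$ and using $\bm{r}_l(T) = \frac{1}{T}\sum_{t=1}^T \bm{s}_l(t)$, I would rearrange to
\begin{equation*}
    \bm{r}_l(T) = \frac{W_{l-1}\bm{r}_{l-1}(T) + \bm{b}_{l-1}}{v_{th,l}} - \frac{\bm{v}_l(T) - \bm{v}_l(0)}{T\,v_{th,l}}.
\end{equation*}
The lemma then reduces to showing that the residual term $\bm{v}_l(T)/(T\,v_{th,l})$ vanishes as $T\to\infty$, componentwise, and that the left-hand side is automatically clipped into $[0,1]$ by the definition of $\bm{r}_l$ (non-negative spike counts, at most one spike per step).

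The clipping is where the three regimes must be separated. In the interior case, where the asymptotic input $(W_{l-1}\bm{r}_{l-1}+\bm{b}_{l-1})/v_{th,l}$ lies in $[0,1]$, I would argue that after any spike the potential drops below $v_{th,l}$, and the net drift is zero in the limit, so $\bm{v}_l(T)$ stays bounded and the residual vanishes. In the saturation case (input $>1$), I would show that the per-step input drives $\bm{m}_l(t)$ above $v_{th,l}$ for all sufficiently large $t$, forcing $\bm{s}_l(t)=1$ and hence $\bm{r}_l\to 1$. In the subthreshold case (input $<0$), the potential decreases without bound, no spikes are produced, and $\bm{r}_l=0$ trivially. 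The main obstacle I anticipate is this regime analysis: in the saturated regime the residual $\bm{v}_l(T)/T$ does not vanish because $\bm{v}_l(T)$ itself grows linearly, so one cannot close the argument by the telescoping identity alone and must appeal directly to the definition of saturation to conclude $\bm{r}_l=1$. Once those three cases are dispatched, the combined statement is exactly the $\mathrm{clip}(\cdot,0,1)$ expression.
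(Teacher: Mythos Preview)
Your proposal is correct and follows the same telescoping-then-divide argument as the paper: collapse Eq.~\ref{eq:neuron} to the one-step identity, sum over $t$, divide by $Tv_{th,l}$, and identify the residual $\bm{v}_l(T)/(Tv_{th,l})$.

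The one noteworthy difference is that the paper dispatches the residual in a single line by asserting $\bm{v}_l(t)\in[0,v_{th,l}]$ for all $t$, so that $\bm{v}_l(T)/(Tv_{th,l})\to 0$ unconditionally, and then invokes $\bm{r}_l\in[0,1]$ to justify the clip. Your three-regime analysis is actually more honest: the paper's range claim is only guaranteed in the interior regime, since with per-step input exceeding $v_{th,l}$ the soft-reset neuron can only subtract $v_{th,l}$ once per step and $\bm{v}_l(t)$ grows linearly (and symmetrically for negative drift). In those two boundary regimes the residual does \emph{not} vanish, and one must argue directly from $\bm{s}_l(t)\equiv 1$ (resp.\ $\equiv 0$) eventually, exactly as you propose. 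So your route is the same skeleton with a more rigorous treatment of the clip; the paper's shortcut happens to land on the right answer because the non-vanishing residual is precisely what the clip absorbs, but your version makes this explicit.
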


\begin{proof}
For the reset-by-subtraction spiking neurons formulated by Eq.~3 in the main text, we can stack the equations and get the discrete function between spikes of layer $l$ and layer $l-1$:
\begin{align}
	\bm{v}_l(t)-\bm{v}_l(t-1)=W_{l-1}\bm{s}_{l-1}(t)+\bm{b}_{l-1}-v_{th,l}\bm{s}_l(t).
\end{align}
By summing the left and right expression\added{s} over time and \replaced{dividing $t v_{th,l}$ on the both sides}{dividing by $t v_{th,l}$}, the equation can be reformulated as:
\begin{align}
        \bm{r}_l(t) = \frac{\sum( \bm{s}_l(t))}{t} = \frac{W_{l-1} \bm{r}_{l-1}(t)+\bm{b}_{l-1}}{v_{th,l}} - \frac{\bm{v}_l(t)}{t v_{th,l}}, 
\end{align}
where $\bm{r}_l (t)$ denotes the firing rates of all neurons in layer $l$. 

From the membrane potential updating function (Eq.~3), $\bm{v}_l (t)$ is in the range of $[0, v_{th,l}]$, thus we have: 
\begin{equation}
   \lim_{t\rightarrow \infty} \frac{\bm{v}_l (t)}{tv_{th,l}}= 0.
\end{equation}
As the value of $\bm{s}_l(t)$ can only be 0 or 1, the firing rate $\bm{r}_l (t)$ is strictly restricted in $[0,1]$. When $t \rightarrow \infty$, it's straightforward to conclude that
\begin{equation}
	\bm{r}_l = {\rm clip} \left (\frac{W_{l-1}\bm{r}_{l-1}+\bm{b}_{l-1}}{v_{th,l}},0,1 \right ).
\end{equation}
\end{proof}

\begin{theorem}
%\label{cor:ann_snn}
For an $L$-layered ANN with the ReLU activation and an $L$-layered SNN with the reset-by-subtraction neurons, assume that $W^{ \text{ANN}}_{l-1},\bm{b}^{\text{ANN}}_{l-1}$ are the parameters for layer $l-1$ of \replaced{the}{an} ANN, 
and $W^{\text{SNN}}_{l-1},\bm{b}^{\text{SNN}}_{l-1}$ are the parameters for layer $l-1$ of \replaced{the}{an} SNN. $\max_{l}$ is the maximum activation of layer $l$ in ANN, and $v_{th,l}$ is the firing threshold of layer $l$ in SNN \added{when $t \rightarrow \infty$}. The ANN can be converted to the SNN (Eq.~1 equals Eq.~4)
if for $l=1,2,...,L$, the following equations hold:
\begin{align}
%\label{eq:equivalent_condition}
    \frac{W^{\text{SNN}}_{l-1}}{v_{th,l}} = W^{\text{ANN}}_{l-1} \frac{\max_{l-1}}{\max_{l}},~~~
        \frac{\bm{b}^{\text{SNN}}_{l-1}}{v_{th,l}} = \frac{\bm{b}^{\text{ANN}}_{l-1} }{\max_{l}}.
\end{align}
\end{theorem}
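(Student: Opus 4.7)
The plan is to establish the conversion equivalence by induction on the layer index $l$, using Lemma~1 as the engine for the inductive step. Define the normalized ANN activation $\tilde{\bm{a}}_l = \bm{a}_l/\max_l$; by construction $\tilde{\bm{a}}_l \in [0,1]$, which already matches the natural range of the SNN firing rate $\bm{r}_l$ given by Lemma~1. The claim I will prove is that $\tilde{\bm{a}}_l = \bm{r}_l$ for every $l = 0,1,\dots,L$ whenever the parameter identities in Eq.~\ref{eq:equivalent_condition} hold, which directly certifies that Eq.~\ref{ann} and Eq.~\ref{eq:accu_rate} describe the same map.

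First I would handle the base case. At the input layer the ANN receives $\bm{a}_0$ and the SNN receives a constant input encoding with average rate $\bm{r}_0$; aligning these by $\bm{r}_0 = \tilde{\bm{a}}_0 = \bm{a}_0/\max_0$ is the standard convention for rate coding and requires no further argument. For the inductive step, assume $\bm{r}_{l-1} = \tilde{\bm{a}}_{l-1}$. Starting from Eq.~\ref{ann}, I divide both sides by $\max_l$ and substitute $\bm{a}_{l-1} = \max_{l-1} \tilde{\bm{a}}_{l-1}$, obtaining
\begin{align*}
\tilde{\bm{a}}_l \;=\; \max\!\left( W^{\text{ANN}}_{l-1}\tfrac{\max_{l-1}}{\max_l}\tilde{\bm{a}}_{l-1} + \tfrac{\bm{b}^{\text{ANN}}_{l-1}}{\max_l},\,0\right).
\end{align*}
Because $\tilde{\bm{a}}_l$ is automatically bounded by $1$, the $\max(\cdot,0)$ can be replaced by $\mathrm{clip}(\cdot,0,1)$ without changing its value. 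Applying Eq.~\ref{eq:equivalent_condition} rewrites the argument as $(W^{\text{SNN}}_{l-1}\tilde{\bm{a}}_{l-1} + \bm{b}^{\text{SNN}}_{l-1})/v_{th,l}$, and invoking Lemma~1 together with the inductive hypothesis $\tilde{\bm{a}}_{l-1} = \bm{r}_{l-1}$ yields $\tilde{\bm{a}}_l = \bm{r}_l$, closing the induction.

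Two minor subtleties need care. First, clipping is valid only because $\tilde{\bm{a}}_l \le 1$ by definition of $\max_l$; I would spell this out to justify the replacement of $\max(\cdot,0)$ by $\mathrm{clip}(\cdot,0,1)$. Second, Lemma~1 holds only in the $t\to\infty$ limit, so the conversion equality is asymptotic; I would mention this explicitly to match the theorem's amended hypothesis. The routine linear-algebra manipulation is not an obstacle; the only conceptual point worth emphasizing is that weight normalization and threshold balancing are two parameterizations of the same identity in Eq.~\ref{eq:equivalent_condition}, which is exactly what the theorem is designed to expose.
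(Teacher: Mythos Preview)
Your proposal is correct and follows essentially the same approach as the paper: both normalize the ANN activations by $\max_l$, replace $\max(\cdot,0)$ by $\mathrm{clip}(\cdot,0,1)$ using the fact that the normalized values lie in $[0,1]$, and then match the resulting expression against the SNN firing-rate formula from Lemma~1 under the parameter identities of Eq.~\ref{eq:equivalent_condition}. The only cosmetic difference is that you frame the argument as an explicit induction on $l$ with a base case at the input, whereas the paper simply writes down the two clip formulas and compares coefficients; the content is identical.
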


\begin{proof}
The conversion of ANN to SNN is built on the basis of the equivalent of SNN firing rate and ANN activation. 
Considering spiking neurons of layer $l$ and $l-1$ in an SNN, the relationship between the firing rate $\bm{r}_l(t)$ and $\bm{r}_{l-1}(t)$ is:
\begin{equation}
	\bm{r}_l = {\rm clip} \left (\frac{W^{\text{SNN}}_{l-1} \bm{r}_{l-1}+\bm{b}^{\text{SNN}}_{l-1}}{v_{th,l}},0,1 \right ).
	\label{eq:cor1_1}
\end{equation}
Note that the firing rate $\bm{r}_{i} (i=1,2,\dots,L)$ in Eq.~\ref{eq:cor1_1} is restricted in $[0,1]$. But the ANN activation (ReLU) only satisfy $a_i \ge 0$ (Eq.~1 in the main text). In fact, for countable limited dataset, the activation generated by the network is also upper bounded. Assume that the upper bound for the output of all ReLU-based artificial neurons in layer $i$ is $\max_i$, we have:
\begin{align}
    0 \leq \bm{a}_i \leq {\rm max}_i.
\end{align}
Let $\bm{z}_i=\frac{\bm{a}_i}{\max_i}$, then $0 \leq \bm{z}_i \leq 1 \, (i=1,2,\dots,L)$. According to Eq.~1, the activation of layer $l$ and $l-1$ satisfy:
\begin{align}
    \bm{a}_l = \max \left( W^{\text{ANN}}_{l-1} \bm{a}_{l-1} +\bm{b}^{\text{ANN}}_{l-1} ,0 \right)
\end{align}
As the $\bm{a}_l$ clipped by $\max_l$ equals the original $\bm{a}_l$. Then, 
\begin{align}
    \bm{a}_l = {\rm clip} \left(W^{\text{ANN}}_{l-1} \bm{a}_{l-1} +\bm{b}^{\text{ANN}}_{l-1} ,0,{\rm max}_l \, \right)
    \label{eq:cor1_2}
\end{align}
By dividing $\max_l$ on both sides of Eq.~\ref{eq:cor1_2} and substituting $\bm{a}_i$ by $\bm{z}_i\max_i$, we have:
\begin{equation}
    \bm{z}_l = {\rm clip} \left( \frac{ W^{\text{ANN}}_{l-1} \bm{z}_{l-1}\max_{l-1} +\bm{b}^{\text{ANN}}_{l-1} }{\max_l}  ,0,1 \right)
    \label{eq:cor1_3}
\end{equation}
Comparing Eq.~\ref{eq:cor1_1} and Eq.~\ref{eq:cor1_3}, We can conclude that Eq.~\ref{eq:cor1_1} equals Eq.~\ref{eq:cor1_3} if for  $l=1,2,...,L$, the following equations hold:
\begin{align}
        \frac{W^{\text{SNN}}_{l-1}}{v_{th,l}} = W^{\text{ANN}}_{l-1} \frac{\max_{l-1}}{\max_{l}},~~\frac{\bm{b}^{\text{SNN}}_{l-1}}{v_{th,l}} = \frac{\bm{b}^{\text{ANN}}_{l-1} }{\max_{l}}
\end{align}

\end{proof}

\begin{theorem}
For layer $l$ in an ANN and the converted SNN with constant coding, given the simulated firing rate $\hat{\bm{r}}_l$ and the real firing rate $\bm{r}_l$, we have:
\begin{equation}
    %\label{eq:upper_bound2}
    K_l < \frac{2\Omega_l}{t},
\end{equation}
where $K_l$ denotes the abbreviation of $K(\hat{\bm{r}}_l,\bm{r}_l(t))$ in layer $l$, $\Omega_l=\frac{ \Vert \hat{\bm{r}}_l \Vert_1 }{\Vert \hat{\bm{r}}_l \Vert_2^2} $. $\Vert \cdot \Vert_p$ denotes $L^p$ norm.
\end{theorem}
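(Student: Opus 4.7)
The plan is to bound the squared deviation per neuron in two complementary ways, multiply the two bounds together, sum over neurons, and then divide by $\|\hat{\bm{r}}_l\|_2^2$. The starting point will be the closed form for the empirical firing rate that already appears (implicitly) in the proof of Lemma~1. Summing the soft-reset update in Eq.~\ref{eq:neuron} from $\tau=1$ to $t$ and dividing by $t$ gives
\begin{equation*}
\bm{r}_l(t) \;=\; \frac{W_{l-1}\bm{r}_{l-1}(t)+\bm{b}_{l-1}}{v_{th,l}} \;-\; \frac{\bm{v}_l(t)}{t\,v_{th,l}},
\end{equation*}
and by construction of the soft reset the residual membrane potential satisfies $\bm{v}_l(t)\in[0,v_{th,l})$. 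Comparing with the $t\to\infty$ limit (which is $\hat{\bm{r}}_l$ by Lemma~1) gives the first pointwise bound $0\le \hat{r}_l^{\,i}-r_l^{\,i}(t) < 1/t$.

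Next I would exploit non-negativity of both rates to obtain the second pointwise bound by the triangle inequality: $|r_l^{\,i}(t)-\hat{r}_l^{\,i}| \le r_l^{\,i}(t)+\hat{r}_l^{\,i}$. Multiplying the two pointwise bounds yields
\begin{equation*}
\bigl(r_l^{\,i}(t)-\hat{r}_l^{\,i}\bigr)^2 \;<\; \frac{r_l^{\,i}(t)+\hat{r}_l^{\,i}}{t}.
\end{equation*}
Summing over the neurons of layer $l$ gives $\|\bm{r}_l(t)-\hat{\bm{r}}_l\|_2^{\,2} < (\|\bm{r}_l(t)\|_1+\|\hat{\bm{r}}_l\|_1)/t$. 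Under constant coding the spike count is obtained by a floor operation, so $\bm{r}_l(t)\le \hat{\bm{r}}_l$ componentwise and therefore $\|\bm{r}_l(t)\|_1\le\|\hat{\bm{r}}_l\|_1$. Combining gives $\|\bm{r}_l(t)-\hat{\bm{r}}_l\|_2^{\,2} < 2\|\hat{\bm{r}}_l\|_1/t$, and dividing both sides by $\|\hat{\bm{r}}_l\|_2^{\,2}$ yields $K_l < 2\Omega_l/t$, exactly the claim.

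The main obstacle I anticipate is making the pointwise bound $|r_l^{\,i}(t)-\hat{r}_l^{\,i}|<1/t$ and the componentwise inequality $\bm{r}_l(t)\le \hat{\bm{r}}_l$ fully rigorous for layers $l>1$: the input to a deep layer is no longer exactly constant because the spike trains from layer $l-1$ are themselves jittered, and $W_{l-1}$ may have negative entries so naive floor-rounding arguments break. I would address this either by interpreting the bound layerwise (viewing $\hat{\bm{r}}_{l-1}$ as the effective input in the limit, so the only surviving error source is the $\bm{v}_l(t)/(tv_{th,l})$ term) or by carefully absorbing the inter-layer jitter into the strict inequality using $\bm{v}_l(t)<v_{th,l}$. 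Tracking where strictness is preserved is what ultimately accounts for the ``$<$'' rather than ``$\le$'' in Eq.~\ref{eq:upper_bound}, and is the subtle accounting step that I expect to require the most care.
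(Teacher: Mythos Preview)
Your proposal is essentially the paper's proof. Both rest on the same per-neuron inequality $0\le \hat r_{l,i}-r_{l,i}(t)<1/t$ coming from the floor/residual-potential argument under constant coding, and both arrive at $\|\bm r_l(t)-\hat{\bm r}_l\|_2^{2}<2\|\hat{\bm r}_l\|_1/t$ before dividing by $\|\hat{\bm r}_l\|_2^{2}$. The only cosmetic difference is in the algebra of the squaring step: the paper expands $(a-b)^2=a^2-2ab+b^2$ and substitutes $a\le b$ and $a>b-1/t$ term by term to land directly on $2b/t$, whereas you factor $(a-b)^2=|a-b|\cdot|a-b|$ and bound each factor separately, which produces an intermediate $\|\bm r_l(t)\|_1+\|\hat{\bm r}_l\|_1$ that then needs the extra observation $\bm r_l(t)\le\hat{\bm r}_l$ to collapse to $2\|\hat{\bm r}_l\|_1$; the paper's route is one step shorter but not materially different. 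Your caution about layers $l>1$ is well placed, but the paper does not address it either: it simply \emph{postulates} $\bm r_l(t)=\lfloor \hat{\bm r}_l t\rfloor/t$ as the constant-coding model for every layer and proceeds, so on that point your proposal is, if anything, more scrupulous than the original.
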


\begin{proof}
Consider that firing is a cumulative and rounded firing process. When $\hat{\bm{r}}_l$ is given, ${\bm{r}}_l(t)$ is approximate as $\frac{\lfloor \hat{\bm{r}}_l t \rfloor}{t}$, also
$\hat{\bm{r}}_{l,i}-\frac{1}{t} < \frac{\lfloor \hat{\bm{r}}_{l,i} t \rfloor}{t} \le \hat{\bm{r}}_{l,i}$, where $\hat{\bm{r}}_{l,i}$ denotes the $i$-th element of the vector $\hat{\bm{r}}_{l}$. For the $l$-th layer, we have:
% \begin{equation}
%     \Vert \frac{ \lfloor \hat{r}_l t \rfloor }{t} - \hat{r} \Vert_2^2 = \sum_i{[ \frac{\lfloor \hat{r}_{l,i} \rfloor}{t} - \hat{r}_{l,i} ]^2}
% \end{equation}
\begin{equation}
	\begin{split}
		\bigg \Vert \bm{r}_l (t)- \hat{\bm{r}}_l \bigg \Vert_2^2 &=\bigg \Vert \frac{\lfloor \hat{\bm{r}}_l t \rfloor}{t} - \hat{\bm{r}}_l \bigg \Vert_2^2 =
		\sum_i \left[ \frac{\lfloor \hat{\bm{r}}_{l,i} t \rfloor}{t} - \hat{\bm{r}}_{l,i} \right]^2 \\
		&= \sum_i \left[ \left( \frac{\lfloor \hat{\bm{r}}_{l,i} t \rfloor}{t} \right)^2 - 2 \hat{\bm{r}}_{l,i} \frac{\lfloor \hat{\bm{r}}_{l,i} t \rfloor}{t} + \hat{\bm{r}}^2_{l,i} \right]\\
		&< \sum_i \left[ \hat{\bm{r}}_{l,i}^2 - 2 \hat{\bm{r}}_{l,i} \left( \hat{\bm{r}}_{l,i}-\frac{1}{t} \right) + \hat{\bm{r}}^2_{l,i}\right]\\
		& = \sum_i \left[ \hat{\bm{r}}^2_{l,i} - 2\hat{\bm{r}}^2_{l,i} +2\hat{\bm{r}}_{l,i} \frac{1}{t} + \hat{\bm{r}}^2_{l,i} \right]\\
		& = \sum_i \frac{2 \hat{\bm{r}}_{l,i}}{t} = \frac{2 \Vert \hat{\bm{r}}_l \Vert_1}{t} 
	\end{split}
\end{equation}
The last equality holds as simulated firing rate $\hat{\bm{r}}_l \ge 0$. Thus the sum of all items in $\hat{\bm{r}}_l$ equals its $L^1$ norm. Now we conclude that:
\begin{equation}
	\begin{split}
		K_l &= \frac{\Vert \bm{r}_l(t) - \hat{\bm{r}}_l \Vert_2^2 }{\Vert \hat{\bm{r}}_l \Vert_2^2}  < \frac{2 \Vert \hat{\bm{r}}_{l} \Vert_1 }{\Vert \hat{\bm{r}}_l \Vert_2^2 t}  = \frac{2\Omega_l}{t} 
	\end{split}
\end{equation}
\end{proof}

\begin{algorithm}
	\caption{Mini-batch training and converting a spiking neural network from a source ANN with Rate Norm Layers. `r\_max' is short for `running\_max'.}
	\label{algo:training}
	\textbf{Require}:A spiking neural network $f_{SNN}$ with thresholds \{$v_{th,k}$|$k$=$1,2,\cdots,L$\}; A network with $L$ Rate Norm Layers $f_{ANN}$; Dataset $D$ for training; Number of epochs for Stage 1 $epoch1$; Number of epochs for Stage 2 $epoch2$ \\
	\textbf{Ensure}: $p_k$=$1.0$, ${\rm r\_max}_k$=$1.0$ $(k=1,2,\dots,L)$; momentum parameter $m=0.1$, $\lambda=0.5$
	\begin{algorithmic}[1]
	   % \normalsize
	    \STATE \textbf{\{Stage 1: Accuracy Training for $f_{ANN}$\}}
	   % \small
	    \FOR{e = 1 to $epoch1$}
	    \FOR{length of Dataset $D$}
	    \STATE Sample minibatch $(\bm{x},\bm{y})$ from $D$
	    \STATE $\hat{\bm{r}}^*_0=\bm{x}$
	    \FOR{k = 1 to L}
	    \STATE ${\rm r\_max}_k$ = (1-$m$) ${\rm r\_max}_k$ +$m$  $\max(\hat{\bm{r}}^*_{k-1})$
	    \STATE $\theta_k$ = $p_k$  ${\rm r\_max}_k$
	    \STATE $\hat{\bm{r}}^*_{k}$ = clip[$(W_{k-1} \hat{\bm{r}}^*_k + \bm{b}_{k-1})/\theta_k$,0,1]
	    \ENDFOR
	    \STATE Loss = $\mathscr{L}(\hat{\bm{r}}^*_{L};y)$
		\STATE Backward propagation through network
		\STATE Update $W_k$ and $\bm{b}_k  (k=1,2,\dots,L-1)$
	    \ENDFOR
		\ENDFOR
% 		\normalsize
		\STATE \textbf{\{Stage 2: Fast Inference Training for $f_{ANN}$\}}
% 		\small
		\FOR{e = 1 to $epoch2$}
	    \FOR{length of Dataset $D$}
	    \STATE Sample minibatch $(\bm{x},\bm{y})$ from $D$
	    \STATE $\hat{\bm{r}}^*_0=\hat{\bm{r}}'_0=\bm{x}$
	    \FOR{k = 1 to L}
	    \STATE ${\rm r\_max}_k$ = (1-$m$) ${\rm r\_max}_k$ +$m$  $\max(\hat{\bm{r}}^*_{k-1})$
	    \STATE $\theta_k$ = $p_k$  ${\rm r\_max}_k$
	    \STATE $\hat{\bm{r}}^*_{k}$ = clip[$(W_{k-1} \hat{\bm{r}}^*_k + \bm{b}_{k-1})/\theta_k$,0,1]
	    \ENDFOR
	    \FOR{k = 1 to L}
		\STATE ${\rm r\_max}_k$ = (1-$m$) ${\rm r\_max}_k$ +$m$  $\max(\hat{\bm{r}}'_{k-1}$)
		\STATE $\theta_k$ = $p_k$  ${\rm r\_max}_k$
		\STATE $\hat{\bm{r}}'_{k}$ = clip[$(W_{k-1} \hat{\bm{r}}'_k + \bm{b}_{k-1})/\theta_k$,0,1]
		\ENDFOR
		\STATE Loss = $\mathscr{T}(\hat{\bm{r}}_L^*,\hat{\bm{r}}_L')$ (Eq.\ref{eq:fast_loss})
		\STATE Backward propagation through network
		\STATE Update $p_k  (k=1,2,\dots,L)$
	    \ENDFOR
		\ENDFOR
% 		\normalsize
		\STATE \textbf{\{Converting SNN $f_{SNN}$ from pre-trained $f_{ANN}$\}}
% 		\small
		\FOR{k = 1 to L}
	    \STATE $f_{SNN}.W_{k}=f_{ANN}.W_{k}$
	    \STATE $f_{SNN}.\bm{b}_{k}=f_{ANN}.\bm{b}_{k}$
	    \STATE $f_{SNN}.v_{th,k}=f_{ANN}.\theta_{k}$
	    \ENDFOR
	    \RETURN $f_{SNN}$
	\end{algorithmic}
\end{algorithm}

\section*{Appendix B: Supplementary of Methods and Experiments}

\added{The training and converting algorithm is described in Algorithm~\ref{algo:training}. In Stage 2, since each trainable $p_i$ will scale the output, to reduce the instability of threshold training on the deep model, all layers share the same $p_i$ when training. Besides, to limit $p_i \in [0,1]$ of Rate Norm Layer, use $sigmoid(p'_i)$ in place of $p_i$.}

\added{The experiments are conducted on the PyTorch platform. The GPU used in training is NVIDIA GeForce RTX 2080 Ti.}

%% The file named.bst is a bibliography style file for BibTeX 0.99c
\clearpage
\bibliographystyle{named}
% \scriptsize
\small

\bibliography{ijcai21}

\end{document}